\newif\ifarxiv
\theoremstyle{plain}
\newtheorem{theorem}{Theorem}[section]
\newtheorem{lemma}[theorem]{Lemma}
\theoremstyle{definition}
\newtheorem{definition}[theorem]{Definition}
\theoremstyle{remark}
\newtheorem{remark}[theorem]{Remark}
\newtheorem{property}{Property}
\newtheorem{question}{Question}
\newcommand{\E}{\mathbb{E}}
\newcommand{\B}{\overline{B}}
\renewcommand{\l}{\left}
\renewcommand{\r}{\right}
\begin{document}

\title{Consistent Diffusion Models:\\
Mitigating Sampling Drift by Learning to be Consistent}
\author{Giannis Daras\footnote{These authors contributed equally to this work}\\Department of Computer Science\\University of Texas at Austin 
\and Yuval Dagan$^*$\\ Electrical Engineering and Computer Science\\University of California, Berkeley 
\and Alexandros G. Dimakis\\Department of ECE\\University of Texas at Austin
\and Constantinos Daskalakis\\Electrical Engineering and Computer Science\\ Massachusetts Institute of Technology} 
\maketitle

\begin{abstract}

Imperfect score-matching leads to a shift between the training and the sampling distribution of diffusion models. Due to the recursive nature of the generation process, errors in previous steps yield sampling iterates that drift away from the training distribution. Yet, the standard training objective via Denoising Score Matching (DSM) is only designed to optimize over non-drifted data. To train on drifted data, we propose to enforce a \emph{consistency} property which states that predictions of the model on its own generated data are consistent across time. 
Theoretically, we show that if the score is learned perfectly on some non-drifted points (via DSM) and if the consistency property is enforced everywhere, then the score is learned accurately everywhere. Empirically we show that our novel training objective yields state-of-the-art results for conditional and unconditional generation in CIFAR-10 and baseline improvements in AFHQ and FFHQ. We open-source our code and models: \href{https://github.com/giannisdaras/cdm}{https://github.com/giannisdaras/cdm}.
\end{abstract}

\section{Introduction} \label{sec:intro}
The diffusion-based~\citep{sohl_thermodynamics, ncsn, ddpm} approach to generative models has been successful across various modalities, including images~\citep{dalle2, imagen, dhariwal2021diffusion, nichol2021improved, ddpm++, ncsnv3, ruiz2022dreambooth, gal2022textual, daras2022multiresolution, daras_dagan_2022score}, videos~\citep{imagen_video, ho2022video, hong2022cogvideo}, audio~\citep{diffusion_for_audio}, 3D structures~\citep{poole2022dreamfusion}, proteins~\citep{anand2022protein, trippe2022diffusion, schneuing2022structure, corso2022diffdock}, and medical applications~\citep{mri_paper, arvinte2022single}. 

Diffusion models generate data by first drawing a  sample from a  noisy distribution and slowly \emph{denoising} this sample to ultimately obtain a sample from the target distribution. This is achieved by sampling, in reverse from time $t=1$ down to $t=0$,  a stochastic process $\{x_t\}_{t \in [0,1]}$ wherein $x_0$ is distributed according to the target distribution $p_0$ and, for all $t$,
\begin{equation}\label{eq:mut}
x_t \sim p_t~~\text{where}~~p_t := p_0 \oplus N(0,\sigma_t^2 I_d).
\end{equation}
That is,~$p_t$ is the distribution resulting from corrupting a sample from $p_0$ with noise sampled from $N(0,\sigma_t^2I_d)$, where $\sigma_t$ is an increasing function such that $\sigma_0=0$ and $\sigma_1$ is sufficiently large so that $p_1$ is nearly indistinguishable from pure noise. We note that diffusion models have been generalized to other types of corruptions by the recent works of \citet{daras2022soft, bansal2022cold, hoogeboom2022blurring, Deasy2021HeavytailedDS, nachmani2021denoising}.

In order to sample from a diffusion model, i.e.~sample the afore-described process in reverse time, it suffices to know the \emph{score function} 
$s(x,t) = \nabla_x \log p(x,t)$,
where $p(x,t)$ is the density of $x_t\sim p_t$. Indeed, given a sample $x_t \sim p_t$, one can use the score function at $x_t$, i.e.~$s(x_t,t)$, to generate a sample from $p_{t-dt}$ by taking an infinitesimal step of a stochastic or an ordinary differential equation \citep{ncsnv3, ddim}, or by using Langevin dynamics~\citep{langevin, ncsnv2}.\footnote{Some of these methods, such as Langevin dynamics, require also to know the score function in the neighborhood of $x_t$.}  Hence, in order to train a diffusion model to sample from a target distribution of interest $p_0^*$ it suffices to learn the score function $s^*(x,t)$ using samples from the corrupted distributions $p_t^*$ resulting from $p_0^*$ and a particular noise schedule $\sigma_t$. Notice that those samples can be easily drawn given samples from $p_0^*$. 

\paragraph{The Sampling Drift Challenge:} Unfortunately the true score function $s^*(x,t)$ is not perfectly learned during training. Thus, at generation time, the samples $x_t$ drawn using the learned score function, $s(x,t)$,
in the ways discussed above, drift astray in distribution from the true corrupted distributions $p^*_t$. This drift becomes larger for smaller $t$ due to compounding of errors and is accentuated by the fact that the further away a sample $x_t$ is from the likely support of the true $p_t^*$ the larger is also the error $\|{s(x_t,t)-s^*(x_t,t)}\|$ between the learned and the true score function at $x_t$, which  feeds into an even larger drift between the distribution of $x_{t'}$ from $p_{t'}^*$ for $t'<t$; see e.g. \citep{sehwag2022generating, ddpm, nichol2021improved,chen2022sampling}. These challenges motivate the question:

\begin{question}
How can one train diffusion models to improve the error $\|{s(x,t)-s^*(x,t)}\|$ between the learned and true score function on inputs $(x,t)$ where $x$ is unlikely under the target noisy distribution $p_t^*$? 
\end{question}

A direct approach to this challenge is to train our model to minimize the afore-described error on pairs $(x,t)$ where $x$ is sampled from distributions other than $p_t^*$. However, there is no straightforward way to do so, because we do not have direct access to the values of the true score function $s^*(x,t)$. 

This motivates us to propose a novel training method to mitigate sampling drift by enforcing that the learned score function satisfies an invariant, that we call ``consistency property.'' This property relates multiple inputs to $s(\cdot,\cdot)$ and can be optimized without using any samples from the target distribution $p_0^*$. As we will show theoretically, enforcing this consistency in conjunction with minimizing a very weakened form of the standard score matching objective (for a single $t$ and an open set of $x$'s) suffices to learn the correct score everywhere. We also provide experiments illustrating that regularizing the standard score matching objective using our consistency property~leads~to~state-of-the-art~models.  

\paragraph{Our Approach:} The true score function $s^*(x,t)$ is closely related to another function, called \emph{optimal denoiser}, which predicts a clean sample $x_0 \sim p_0^*$ from a noisy observation $x_t= x_0 + \sigma_t \eta$ where the noise is $\eta\sim N(0,I_d)$.
The optimal denoiser (under the $\ell_2$ loss) is the conditional expectation: 
\[
h^*(x,t) := \E[x_0 \mid x_t = x],
\]
and the true score function can be obtained from the optimal denoiser as follows: $s^*(x,t) = (h^*(x,t)-x)/\sigma_t^2$. Indeed, the standard training technique, via \emph{score-matching}, explicitly trains for the score through the denoiser $h^*$ \citep{dsm, efron2011tweedie, meng2021estimating, kim2021noise2score, luo2022understanding}. 

We are now ready to state our consistency property.  We will say that a (denoising) function $h(x,t)$ is {\em consistent} iff
\[
\forall t, \forall x: \E[x_0|x_t=x]= h(x,t),  
\]
where the \textit{expectation is with respect to a sample from the \textbf{learned} reverse process}, defined in terms of the implied score function $s(x,t) = (h(x,t)-x)/\sigma_t^2$, when this is initialized at $x_t=x$ and run backwards in time to sample $x_{0}$. See~Eq.~\eqref{eq:reverse} for the precise stochastic differential equation and its justification. 
In particular, $h$ is called consistent if the prediction $h(x,t)$ of the conditional expectation of the clean image $x_0$ given $x_t=x$ equals the expected value of an image that is generated by the learned reversed process, starting from $x_t=x$.

While there are several other properties that the score function of a diffusion process must satisfy, e.g.~the Fokker-Planck equation \citep{lai2022regularizing}, our first theoretical result is that the consistency of $h(x,t)$ suffices (in conjunction with the conservativeness of its score function $s(x,t)= (h(x,t)-x)/\sigma_t^2$)  to guarantee that $s$ must be the score function of a diffusion process (and must thus satisfy any other property that a diffusion process must satisfy). 
If additionally $s(x,t)$ equals the score function $s^*(x,t)$ of a target diffusion process at a single time $t=t_0$ and  an open subset of $x \in \mathbb{R}^d$, then it equals $s^*$ everywhere.
Intuitively, this suggests that learning the score in-sample for a single $t=t_0$, and satisfying the consistency and conservativeness properties off-sample, also yields a correct estimate off-sample. This can be summarized as follows below:
\begin{theorem}[informal]
If some denoiser  $h(x,t)$ is consistent and its corresponding score function $s(x,t)=(h(x,t)-x)/\sigma_t^2$ is a conservative field, then $s(x,t)$ is the score function of a diffusion process, i.e.~the generation process using score function $s$ is the inverse of a diffusion process. If additionally $s(x,t) = s^*(x,t)$ for a single $t=t_0$ and all $x$ in an open subset of $\mathbb{R}^d$, where $s^*$ is the score function of a target diffusion process, then $s(x,t)=s^*(x,t)$ everywhere, i.e.~to learn the score function everywhere it suffices to learn it for a single $t_0$ and an open subset of $x$'s.
\end{theorem}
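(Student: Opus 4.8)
The plan is to prove the two claims in sequence. First I would show that \emph{consistency together with conservativeness forces $s$ to be a genuine score}, i.e.\ $s(x,t)=\nabla_x\log q_t(x)$ for a Gaussian-smoothing family $q_t=q_0\oplus N(0,\sigma_t^2 I_d)$; running the reverse process with such an $s$ then inverts the forward diffusion of $q_0$, which is the first assertion. Second I would \emph{bootstrap} from the in-sample identity $s(\cdot,t_0)=s^*(\cdot,t_0)$ on an open set to global equality using analyticity. Throughout write $\beta(t):=\tfrac{d}{dt}\sigma_t^2\ge 0$ and recall that by definition $h=x+\sigma_t^2 s$.

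For the first claim the key is to turn consistency into a local statement. Fix $t'<t$. Since the learned reverse process is Markov, the tower rule together with consistency applied at \emph{both} times yields the martingale-type identity
\[
h(x,t)=\E_{\mathrm{rev}}\!\big[\,h(x_{t'},t')\,\big|\,x_t=x\,\big],
\]
where in the inner conditional expectation I substituted $\E_{\mathrm{rev}}[x_0\mid x_{t'}]=h(x_{t'},t')$. Letting $t'=t-\epsilon\to t$ and expanding the right-hand side with the reverse-time generator of the stochastic differential equation (Eq.~\eqref{eq:reverse}), namely $\mathcal{L}_t\phi=\beta(t)\,s\cdot\nabla\phi+\tfrac12\beta(t)\Delta\phi$, I obtain the evolution equation
\[
\partial_t h=\beta(t)\,(s\cdot\nabla)h+\tfrac12\beta(t)\,\Delta h .
\]
Substituting $h=x+\sigma_t^2 s$ and cancelling reduces this to an equation purely in $s$, and here \emph{conservativeness} enters decisively: a symmetric Jacobian gives $(s\cdot\nabla)s=\tfrac12\nabla|s|^2$ and a curl-free field gives $\Delta s=\nabla(\nabla\cdot s)$, collapsing everything to the score PDE
\[
\partial_t s=\tfrac12\beta(t)\,\nabla\!\big(\nabla\cdot s+|s|^2\big).
\]
Using conservativeness once more, let $\Psi(\cdot,t)$ be a potential with $s=\nabla_x\Psi$ and set $q_t\propto e^{\Psi(\cdot,t)}$. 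The score PDE says exactly that $\nabla_x\big[\partial_t\log q_t-\tfrac12\beta(\nabla\cdot s+|s|^2)\big]=0$, so choosing the time-dependent normalization appropriately makes $\log q_t$ satisfy $\partial_t\log q_t=\tfrac12\beta(\Delta\log q_t+|\nabla\log q_t|^2)$, i.e.\ $q_t$ solves the heat equation $\partial_t q_t=\tfrac12\beta\,\Delta q_t$. Hence $q_t=q_0\oplus N(0,\sigma_t^2 I_d)$ and $s=\nabla_x\log q_t$ is the score of a diffusion process.

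For the second claim I now have two genuine Gaussian-smoothing families $q_t=q_0\oplus N(0,\sigma_t^2 I_d)$ and $p^*_t=p^*_0\oplus N(0,\sigma_t^2 I_d)$ whose scores agree on an open set $U$ at time $t_0$: $\nabla\log q_{t_0}=\nabla\log p^*_{t_0}$ on $U$. On each connected component of $U$ this integrates to $q_{t_0}=C\,p^*_{t_0}$. The crucial point is that convolution with a nondegenerate Gaussian is smoothing to the point of \emph{real-analyticity}, so both $q_{t_0}$ and $p^*_{t_0}$ extend to real-analytic functions on $\mathbb{R}^d$. Therefore $q_{t_0}-C\,p^*_{t_0}$ is analytic and vanishes on the open set $U$, so by the identity theorem it vanishes on all of $\mathbb{R}^d$; integrating both densities to $1$ forces $C=1$, whence $q_{t_0}=p^*_{t_0}$ everywhere. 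A Fourier argument (the Gaussian characteristic function is nowhere zero, so Gaussian convolution is injective) then gives $q_0=p^*_0$, hence $q_t=p^*_t$ and $s=s^*$ for all $(x,t)$.

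The main obstacle is analytic rather than conceptual. The martingale identity and its infinitesimal limit are clean, but making the generator expansion rigorous requires enough regularity and growth control on $h$ (and on the transition densities of the reverse process) to justify the $o(\epsilon)$ expansion under the expectation; likewise, the construction $q_t\propto e^{\Psi}$ presupposes that $e^{\Psi(\cdot,t)}$ is integrable for each $t$, so that a finite normalization exists. I expect these to be the points requiring care. The role of conservativeness is twofold and is what makes the argument go through: it produces the potential $\Psi$, and it is exactly the ingredient that reduces the vector evolution equation for $h$ to the scalar score PDE. The analyticity step, by contrast, is robust, and it is precisely where the ``open subset'' hypothesis is used.
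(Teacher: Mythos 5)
Your proposal is correct and follows essentially the same route as the paper's proof: consistency yields the reverse-martingale identity, whose generator/Ito expansion gives the evolution equation for $h$ and hence the score PDE; conservativeness supplies the potential so the (suitably normalized) density solves the heat equation, identifying $s$ as a genuine score of a Gaussian-smoothing family; and the second claim follows from real-analyticity under Gaussian convolution, the identity theorem, normalization to fix the constant, and injectivity of Gaussian convolution via Fourier transform. The only differences are cosmetic — you apply the identity theorem to the densities (after locally integrating the score equality) where the paper applies it directly to the scores, and you absorb the integration constant $c(t)$ into the normalization where the paper proves $c(t)=0$ from $\int p\,dx=1$ — and the regularity caveats you flag (heat-equation uniqueness, justifying the infinitesimal expansion) are glossed over at the same level in the paper itself.
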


We propose a loss function to train for the {consistency} property and we show experimentally that regularizing the standard score matching objective using our consistency property~leads~to~better~models. 

\paragraph{Summary of Contributions:}
\begin{enumerate}
    \item We identify an invariant property, consistency of the denoiser $h$,
    that any perfectly trained  model should satisfy.
 
    \item We prove that if the denoiser $h(x,t)$ is consistent and its implied score function $s(x,t)=(h(x,t)-x)/\sigma_t^2$ is a conservative field, then $s(x,t)$ is the score function of {\em some} diffusion process, even if there are learning errors with respect to the  score  of the target process, which generates the training data.
 
    \item We prove that if these two properties are satisfied, then optimizing perfectly the score for a single $t=t_0$ and  an open subset $S\subseteq \mathbb R^d$, guarantees that the score is learned perfectly everywhere.
 
    \item We propose a novel training objective that enforces the consistency property. Our new objective optimizes the network to have consistent predictions on data points from the \textit{learned} distribution.
    
    \item We show experimentally that, paired with the original Denoising Score Matching (DSM) loss, our objective achieves a new state-of-the-art on conditional and unconditional generation in CIFAR10 and baseline improvements in AFHQ and FFHQ.

    \item We open-source our code and models: \href{https://github.com/giannisdaras/cdm}{https://github.com/giannisdaras/cdm}.
\end{enumerate}

\section{Background}

\paragraph{Diffusion processes, score functions and denoising.}

Diffusion models are trained by solving a supervised regression problem~\citep{ncsn, ddpm}. The function that one aims to learn, called the score function, defined below, is equivalent (up to a linear transformation) to a denoising function~\citep{efron2011tweedie, dsm}, whose goal is to denoise an image that was injected with noise. In particular, for some target distribution $p_0$, one's goal is to learn the following function $h \colon \mathbb{R}^d\times [0,1] \rightarrow \mathbb{R}^d$:
\begin{equation}\label{eq:denoising}
h(x,t) = \E[x_0 \mid x_t=x]; \ x_0 \sim p_0,\ x_t\sim N(x_0, \sigma_t^2 I_d).
\end{equation}
In other words, the goal is to predict the expected ``clean'' image $x_0$ given a corrupted version of it, assuming that the image was sampled from $p_0$ and its corruption was done by adding to it noise from $N(0, \sigma_t^2 I_d)$,  where $\sigma_t^2$ is a non-negative and increasing function of $t$. Given such a function $h$, we can generate samples from $p_0$ by solving a Stochastic Differential Equation (SDE) that depends on $h$~\citep{ncsnv3}. Specifically, one starts by sampling $x_1$ from some fixed distribution and then runs the following SDE backwards in time:
\begin{equation}\label{eq:reverse}
dx_t = - g(t)^2 \frac{h(x_t,t)-x_t}{\sigma_t^2} dt + g(t) d\B_t,
\end{equation}
where $\B_t$ is a reverse-time Brownian motion and $g(t)^2 = \frac{d\sigma_t^2}{dt}$. To explain how Eq.~\eqref{eq:reverse} was derived, consider the \emph{forward} SDE that starts with a clean image $x_0$ and slowly injects noise:
\begin{equation}\label{eq:forward}
dx_t = g(t) dB_t, \ x_0 \sim p_0.
\end{equation}
We notice here that the $x_t$ under Eq.~\eqref{eq:forward} is $N(x_0, \sigma_t^2 I_d)$, where $x_0 \sim p_0$, so it has the same distribution that it has in Eq.~\eqref{eq:denoising}. Remarkably, such SDEs are reversible in time~\citep{anderson}. Hence, the diffusion process of Eq.~\eqref{eq:forward} can be viewed as a reversed-time diffusion:
\begin{equation}
\label{eq:reverse-with-score}
dx_t = - g(t)^2 \nabla_x \log p(x_t,t) dt + g(t) d\B_t,
\end{equation}
where $p(x_t,t)$ is the density of $x_t$ at time $t$. We note that $s(x,t):=\nabla_x \log p(x,t)$ is called the \emph{score function} of $x_t$ at time $t$. Using Tweedie's lemma~\citep{efron2011tweedie}, one obtains the following relationship between the denoising function $h$ and the score function:
\begin{equation}\label{eq:Twee}
\nabla_x \log p(x,t)
= \frac{h(x,t)-x}{\sigma_t^2}.
\end{equation}
Substituting Eq.~\eqref{eq:Twee} in Eq.~\eqref{eq:reverse-with-score}, one obtains Eq.~\eqref{eq:reverse}.

\paragraph{Training via denoising score matching.}
The standard way to train for $h$ is via \emph{denoising score matching}. This is performed by obtaining samples of $x_0 \sim p_0$ and $x_t \sim N(x_0,\sigma_t^2I_d)$ and training to minimize
\begin{gather*}\label{eq:dsm}
 \E_{x_0 \sim p_0, x_t \sim N(x_0, \sigma_t^2 I_d)} L^1_{t, x_t, x_0}(\theta)
 = \E_{x_0 \sim p_0, x_t \sim N(x_0, \sigma_t^2 I_d)} \l\|h_\theta(x_t, t) - x_0 \r\|^2,
\end{gather*}
where the optimization is over some family of functions, $\{h_\theta\}_{\theta \in \Theta}$. It was shown by \citet{dsm} that optimizing Eq.~\eqref{eq:dsm} is equivalent to optimizing $h$ in mean-squared-error on a random point $x_t$ that is a noisy image, $x_t \sim N(x_0,\sigma_t^2I_d)$ where $x_0 \sim p_0$:
\[
\E_{x_t} \l\|h_\theta(x_t, t) - h^*(x_t,t) \r\|^2,
\]
where $h^*$ is the true denoising function from Eq.~\eqref{eq:denoising}.

\section{Theory} \label{sec:method}
We define below the consistency property that a function $h$ should satisfy. This states that the output of $h(x,t)$ (which is meant to approximate the conditional expectation of $x_0$ conditioned on $x_t=x$) is indeed consistent with the average point $x_0$ generated using $h$ and conditioning on $x_t=x$. Recall from the previous section that generation according to $h$ conditionning on $x_t=x$ is done by running the following SDE backwards in time conditioning on $x_t=x$:
\begin{equation}\label{eq:sde-h}
dx_t = - g(t)^2 \frac{h(x_t,t)-x_t}{\sigma_t^2} dt + g(t)^2 d\B_t,
\end{equation}
The consistency property is therefore defined as follows:
\begin{property}[\textbf{Consistency}.] \label{prop:1}
A function $h \colon \mathbb{R}^d \times [0,1] \to\mathbb{R}^d$ is said to be {\em consistent} iff for all $t \in (0,1]$ and all $x \in \mathbb{R}^d$, 
\begin{equation}\label{eq:consistency}
h(x,t) = \E_h[x_0 \mid x_t = x],
\end{equation}
where $\E_h[x_0 \mid x_t=x]$ corresponds to the conditional expectation of $x_0$ in the process that starts with $x_t = x$ and samples $x_0$ by running the SDE of Eq.~\eqref{eq:sde-h} backwards in time (where note that the SDE uses $h$).
\end{property}

The following Lemma states that Property~\ref{prop:1} holds if and only if the model prediction, $h(x,t)$, is consistent with the average output of $h$ on samples that are generated using $h$ and conditioning on $x_t=x$, i.e.~that $h(x_t,t)$ is a reverse-Martingale under the same process of Eq.~\eqref{eq:sde-h}. 
\begin{lemma} \label{lem:martingale}
Property~\ref{prop:1} holds if and only if the following two properties hold:
\begin{itemize}
\item The function $h$ is a reverse-Martingale, namely: for all $t> t'$ and for any $x$:
\[
h(x,t) = \E_h[h(x_{t'},t') \mid x_t=x],
\]
where the expectation is over $x_{t'}$ that is sampled according to Eq.~\eqref{eq:sde-h} with the same function $h$, given the initial condition $x_t=x$.
\item For all $x\in \mathbb{R}^d$, $h(x,0)=x$.
\end{itemize}
\end{lemma}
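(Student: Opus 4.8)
The plan is to exploit two structural facts about the reverse process $\{x_s\}_{s\le t}$ defined by Eq.~\eqref{eq:sde-h} with initial condition $x_t=x$: first, that it is a (time-inhomogeneous) Markov process, so conditioning on the entire trajectory down to time $s$ is equivalent to conditioning on the state $x_s$ alone; and second, the tower property of conditional expectation. Both implications of the claimed equivalence then reduce to combining these two facts with the terminal identity $h(x_0,0)=x_0$, which is precisely what links the running process value $h(x_s,s)$ to the quantity $x_0$ appearing in Property~\ref{prop:1}. Phrased differently, the two listed conditions together say that $\{h(x_s,s)\}_{s\le t}$ is a reverse-martingale pinned at $s=0$ to the value $x_0$, and consistency is just the statement that this pinned martingale has the prescribed terminal expectation.

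\emph{Forward direction} (Property~\ref{prop:1} $\Rightarrow$ reverse-Martingale and boundary condition): fix $t>t'$ and condition the process at $x_t=x$. Starting from the consistency identity $h(x,t)=\E_h[x_0\mid x_t=x]$, I insert the intermediate time $t'$ using the tower and Markov properties,
\[
h(x,t)=\E_h[x_0\mid x_t=x]=\E_h\!\big[\,\E_h[x_0\mid x_{t'}]\;\big|\;x_t=x\big],
\]
and then apply consistency a second time, now at the point $x_{t'}$ and time $t'$, to get $\E_h[x_0\mid x_{t'}]=h(x_{t'},t')$. Substituting yields the reverse-Martingale identity $h(x,t)=\E_h[h(x_{t'},t')\mid x_t=x]$. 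For the boundary condition I would let $t\to 0^+$ in $h(x,t)=\E_h[x_0\mid x_t=x]$: since the process started at $x_t=x$ has vanishing time to evolve, $x_0\to x$, so by continuity of $h$ at $t=0$ we obtain $h(x,0)=x$.

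\emph{Reverse direction} (reverse-Martingale and boundary condition $\Rightarrow$ Property~\ref{prop:1}): here I would simply take the Martingale identity with the earlier time sent to $0$, i.e.\ $t'\to 0^+$, giving $h(x,t)=\E_h[h(x_0,0)\mid x_t=x]$, and then substitute the boundary value $h(x_0,0)=x_0$ to recover $h(x,t)=\E_h[x_0\mid x_t=x]$, which is exactly Eq.~\eqref{eq:consistency}.

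The main obstacle is not the algebra, which is a one-line tower-property manipulation in each direction, but the measure-theoretic and analytic bookkeeping that justifies it: I must establish that Eq.~\eqref{eq:sde-h} is well-posed (existence and uniqueness of the reverse diffusion) so that a genuine Markov process and its conditional expectations exist, and I must control the $t'\to 0^+$ (respectively $t\to 0^+$) limit connecting the running value $h(x_s,s)$ to the terminal $x_0$ and to the boundary value. In particular one should verify that $h(x_s,s)$ is integrable along the trajectory and that the substitution $\E_h[x_0\mid x_{t'}]=h(x_{t'},t')$ may be carried out inside the outer expectation (via uniform integrability or a continuity argument), which is where any regularity hypotheses on $h$ and on the SDE coefficients will be required.
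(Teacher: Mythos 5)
Your proposal is correct and follows essentially the same route as the paper's proof: the forward direction uses the tower property together with the Markov property of the reverse SDE (conditioning through $x_{t'}$ and applying consistency a second time at $(x_{t'},t')$), and the reverse direction plugs the terminal time into the martingale identity and uses $h(x_0,0)=x_0$. The only cosmetic difference is that you handle the boundary cases by limits $t\to 0^+$ (resp.\ $t'\to 0^+$) with continuity, whereas the paper simply evaluates Property~\ref{prop:1} and the martingale identity at time $0$ directly.
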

The proof of this Lemma is included in the Appendix. Further, we introduce one more property that will be required for our theoretical results: the learned vector-field should be conservative.

\begin{property}[\textbf{Conservative vector field / Score Property.}] \label{prop:2}
Let $h \colon \mathbb{R}^d\times [0,1]\to \mathbb{R}^d$. We say that $h$ induces a \emph{conservative vector field} (or that is satisfies the score property) if for any $t\in (0,1]$ there exists some probability density $p(\cdot,t)$ such that 
\[
\frac{h(x,t)-x}{\sigma_t^2} = \nabla \log p(x,t).
\]
\end{property}

We note that the optimal denoiser, i.e. $h$ defined as in Eq.~\eqref{eq:denoising} satisfies both of the properties we introduced.
In the paper, we will focus on enforcing the consistency property and we are going to assume conservativeness for our theoretical results. This assumption can be relieved to hold only at a \emph{single} $t \in (0,1]$ using results of \citet{lai2022regularizing}.

Next, we show the theoretical consequences of enforcing Properties~\ref{prop:1} and \ref{prop:2}. First, we show that this enforces $h$ to indeed correspond to a denoising function, namely, $h$ satisfies Eq.~\eqref{eq:denoising} for some distribution $p_0'$ over $x_0$. Yet, this does not imply that $p_0$ is the \emph{correct} underlying distribution that we are trying to learn. Indeed, these properties can apply to any distribution $p_0$. Yet, we can show that if we learn $h$ correctly for some inputs and if these properties apply everywhere then $h$ is learned correctly everywhere.
\begin{theorem} \label{thm:main}
Let $h \colon \mathbb{R}^d\times [0,1] \rightarrow \mathbb{R}^d$ be a continuous function. Then:
\begin{enumerate}
\item 
The function $h$ satisfies both  Properties \ref{prop:1} and \ref{prop:2} if and only if $h$ is defined by Eq.~\eqref{eq:denoising} for some distribution $p_0$. 
\item
Assume that $h$ satisfies Properties \ref{prop:1} and \ref{prop:2}. Further, let $h^*$ be another function that corresponds to Eq.~\eqref{eq:denoising} with some initial distribution $p_0^*$. Assume that $h=h^*$ on some open set $U \subseteq \mathbb{R}^d$ and some fixed $t_0 \in (0,1]$, namely, $h(x,t_0) = h^*(x,t_0)$ for all $x \in U$. Then,
 $h^*(x,t)=h
(x,t)$ for all $x$ and all $t$.
\end{enumerate}
\end{theorem}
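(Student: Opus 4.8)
The plan is to convert the consistency property into a \emph{pointwise} PDE, and then to run two rigidity arguments: uniqueness for the heat equation (for Part~1) and analytic continuation plus Fourier deconvolution (for Part~2). The ``only if'' direction of Part~1 is essentially given: a genuine denoiser satisfies \Cref{prop:2} by Tweedie's formula~\eqref{eq:Twee}, and satisfies \Cref{prop:1} because the reverse SDE~\eqref{eq:reverse} exactly inverts the forward noising, so $\E_h[x_0\mid x_t=x]$ equals the true posterior mean $h(x,t)$. The real work is the converse.

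For the converse of Part~1, write $s(x,t)=(h(x,t)-x)/\sigma_t^2$ and, using \Cref{prop:2}, fix the \emph{normalized} density $p(\cdot,t)$ with $s=\nabla\log p(\cdot,t)$. By \Cref{lem:martingale}, consistency is equivalent to $h(x_t,t)$ being a reverse-martingale under~\eqref{eq:sde-h} together with $h(\cdot,0)=\mathrm{id}$. Since the martingale identity holds started from \emph{every} $(x,t)$, applying It\^o's formula and forcing the drift to vanish yields the pointwise equation
\begin{equation}
\partial_t h = \tfrac12 g(t)^2 \Delta h + g(t)^2\,(s\cdot\nabla) h,\qquad g(t)^2=\frac{d\sigma_t^2}{dt},
\end{equation}
valid for all $(x,t)$. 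Substituting $h=x+\sigma_t^2\nabla\log p$ and $s=\nabla\log p$, this is the spatial gradient of a scalar identity, so it collapses to the forward Fokker--Planck (heat) equation $\partial_t p=\tfrac12 g(t)^2\Delta p$ up to a spatially constant function of $t$, which is killed by $\int p(\cdot,t)\,dx=1$. By uniqueness of nonnegative integrable solutions of the heat equation, the only such family with initial data $p_0:=p(\cdot,0)$ is the Gaussian convolution $p(\cdot,t)=p_0\oplus N(0,\sigma_t^2 I)$, and Tweedie's formula then gives $h(x,t)=\E_{p_0}[x_0\mid x_t=x]$, i.e.~\eqref{eq:denoising}. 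I expect this direction to be the main obstacle: the It\^o step, the algebraic reduction to Fokker--Planck, and the regularity needed to differentiate all require care.

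For Part~2 I would argue by analytic rigidity. By Part~1 and the hypothesis, both $h$ and $h^*$ are denoisers of forward diffusions, so $p(\cdot,t)=p_0\oplus N(0,\sigma_t^2 I)$ and $p^*(\cdot,t)=p_0^*\oplus N(0,\sigma_t^2 I)$. The assumption $h(\cdot,t_0)=h^*(\cdot,t_0)$ on the open set $U$ is equivalent to $\nabla\log p(\cdot,t_0)=\nabla\log p^*(\cdot,t_0)$ on $U$, so on a connected component of $U$ we obtain $p(\cdot,t_0)=c\,p^*(\cdot,t_0)$ for a constant $c$. The key observation is that convolution with a Gaussian kernel extends to an \emph{entire} function of the space variable: $\exp(-\|x-y\|^2/2\sigma_{t_0}^2)$ is holomorphic in $x$ and the integral against $p_0$ converges for all complex $x$, uniformly on compacts. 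Hence $p(\cdot,t_0)$ and $p^*(\cdot,t_0)$ are restrictions to $\mathbb{R}^d$ of entire functions, and an entire function vanishing on a nonempty real open set vanishes identically; therefore $p(\cdot,t_0)=c\,p^*(\cdot,t_0)$ on all of $\mathbb{R}^d$, and integrating forces $c=1$.

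Finally, $p_0\oplus N(0,\sigma_{t_0}^2 I)=p_0^*\oplus N(0,\sigma_{t_0}^2 I)$; taking Fourier transforms and dividing by the nowhere-vanishing Gaussian factor $\exp(-\sigma_{t_0}^2\|\xi\|^2/2)$ gives $\widehat{p_0}=\widehat{p_0^*}$, so $p_0=p_0^*$. Consequently $p(\cdot,t)=p^*(\cdot,t)$ for every $t$, whence $s=s^*$ and $h=h^*$ everywhere. Relative to Part~1 this part is clean once entire extension and Fourier deconvolution are in place; the only subtlety is the passage from the gradient identity to $p=c\,p^*$ on a connected piece of $U$ before invoking analytic continuation.
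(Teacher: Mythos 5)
Your outline follows essentially the same route as the paper's proof in Section~\ref{sec:main-pr}: It\^o's formula plus the martingale characterization (Lemma~\ref{lem:martingale}) to obtain a pointwise PDE, reduction of that PDE to the heat equation via $s=\nabla\log p$ and normalization (this is exactly Lemma~\ref{lem:prop-implies-PDE} combined with the proof of Lemma~\ref{lem:uniq}), and, for Part~2, analytic continuation from the open set $U$ followed by Fourier deconvolution of the Gaussian (Lemma~\ref{lem:t-implies-0}). However, there is one step in your Part~1 that would fail as written: you anchor the heat-equation uniqueness at time zero, writing $p_0:=p(\cdot,0)$. Property~\ref{prop:2} only supplies densities for $t\in(0,1]$ (the relation $s=(h(x,t)-x)/\sigma_t^2$ is degenerate at $t=0$ since $\sigma_0=0$), and the time-zero law need not have a density at all. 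Concretely, if $h$ is the exact denoiser of the point mass $\delta_0$, then $h(x,t)\equiv 0$ satisfies both properties and is squarely within the scope of the theorem, yet there is no initial density to feed into your uniqueness step. The paper avoids this by running uniqueness on $[t_0,1]$ for \emph{every} $t_0>0$ with initial data $p(\cdot,t_0)$ --- which Property~\ref{prop:2} does provide --- and only then concluding that $s$ is the score of a diffusion on all of $(0,1]$; your argument needs the same maneuver.

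One difference works in your favor. In Part~2 you continue analytically at the level of densities, observing that a Gaussian convolution extends to an entire function of $x$, and you first integrate $\nabla\log p=\nabla\log p^*$ on a connected component of $U$ to get $p=c\,p^*$ there, killing $c$ by normalization after continuation. The paper instead proves analyticity of the score $s(\cdot,t_0)$ as a ratio of two analytic functions (Lemma~\ref{lem:analytic-func}) and continues the scores, integrating the gradient identity afterwards. Your entire-extension argument relies only on the Gaussian decay of the kernel, so it goes through for an arbitrary probability measure $p_0$ and sidesteps the bounded-support hypothesis that Lemma~\ref{lem:analytic-func} imposes --- a hypothesis the theorem statement itself does not make. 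The remaining steps (the collapse to Fokker--Planck, the $c=1$ normalization, the deconvolution $\hat p_0=\hat p_0^*$, and propagating equality to all $t$) match the paper's argument.
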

\begin{proof}[Proof overview]
We start with the first part of the theorem. 
We assume that $h$ satisfies Properties \ref{prop:1} and \ref{prop:2} and we will show that $h$ is defined by Eq.~\eqref{eq:denoising} for some distribution $p_0$ (while the other direction in the equivalence follows trivially from the definitions of these properties).
Motivated by Eq.~\eqref{eq:Twee}, define the function $s \colon \mathbb{R}^d\times (0,1]$ according to 
\begin{equation}\label{eq:s-h-sketch}
s(x,t) = \frac{h(x,t)-x}{\sigma_t^2}.
\end{equation}
We will first show that $s$ satisfies the partial differential equation
\begin{equation}\label{eq:PDE}
\frac{\partial s}{\partial t}
= g(t)^2 \l(J_s s + \frac{1}{2} \triangle s\r),
\end{equation}
where $J_s \in \mathbb{R}^{d\times d}$ is the Jacobian of $s$, $(J_s)_{ij} = \frac{\partial s_i}{x_j}$ and each coordinate $i$ of $\triangle s \in \mathbb{R}^d$ is the Laplacian of coordinate $i$ of $s$, $(\triangle s)_i = \sum_{j=1}^n \frac{\partial^2 s_i}{\partial x_j^2}$. In order to obtain Eq.~\eqref{eq:PDE}, first, we use a generalization of Ito's lemma, which states that for an SDE
\begin{equation}\label{eq:backward-general}
dx_t = \mu(x_t,t) dt + g(t) d\B_tx
\end{equation}
and for $f \colon \mathbb{R}^d\times[0,1]\to \mathbb{R}^d$, $f(x_t,t)$ satisfies the SDE
\[
df(x_t,t) = \l(\frac{\partial f}{\partial t} + J_f \mu
- \frac{g(t)^2}{2} \triangle f\r)dt + g(t) J_f d\B_t.
\]
If $f$ is a reverse-Martingale then the term that multiplies $dt$ has to equal zero, namely,
\[
\frac{\partial f}{\partial t} + J_f \mu
- \frac{g(t)^2}{2} \triangle f
= 0.
\]
By Lemma~\ref{lem:martingale}, $h(x_t,t)$ is a reverse Martingale, therefore we can substitute $f=h$ and substitute $\mu = -g(t)^2 s$ according to Eq.~\eqref{eq:sde-h}, to deduce that
\[
\frac{\partial h}{\partial t} - g(t)^2 J_h s - \frac{g(t)^2}{2} \triangle h
= 0.
\]
Substituting $h(x,t) = \sigma_t^2 s(x,t) + x$ according to Eq.~\eqref{eq:Twee} yields Eq.~\eqref{eq:PDE} as required.

Next, we show that any $s'$ that is the score-function (i.e. gradient of log probability) of some diffusion process that follows the SDE Eq.~\eqref{eq:forward}, also satisfies Eq.~\eqref{eq:PDE}. To obtain this, one can use the Fokker-Planck equation, whose special case states that the density function $p(x,t)$ of any stochastic process that satisfies the SDE Eq.~\eqref{eq:forward} satisfies the PDE
\[
\frac{\partial p}{\partial t}
= \frac{g(t)^2}{2} \triangle p
\]
where $\triangle$ corresponds to the Laplacian operator. Using this one can obtain a PDE for $\nabla_x \log p$ which happens to be exactly Eq.~\eqref{eq:PDE} if the process is defined by Eq.~\eqref{eq:forward}.

Next, we use Property~\ref{prop:2} to deduce that there exists some densities $p(\cdot,t)$ for $t \in [0,1]$ such that 
\[
s(x,t) = \frac{h(x,t)-x}{\sigma_t^2}
= \nabla_x \log p(x,t).
\]
Denote by $p'(x,t)$ the score function of the diffusion process that is defined by the SDE of Eq.~\eqref{eq:forward} with the initial condition that $p(x,0)=p'(x,0)$ for all $x$. Denote by $s'(x,t) = \nabla_x \log p'(x,t)$ the score function of $p'$. As we proved above, both $s$ and $s'$ satisfy the PDE Eq.~\eqref{eq:PDE} and the same initial condition at $t=0$. By the uniqueness of the PDE, it holds that $s(x,t)=s'(x,t)$ for all $t \ge t_0$. Denote by $h^*$ the function that satisfies Eq.~\eqref{eq:denoising} with the initial condition $x_0 \sim p_0$. By Eq.~\eqref{eq:Twee},
\[
s'(x,t) = \frac{h^*(x,t)-x}{\sigma_t^2}.
\]
By Eq.~\eqref{eq:s-h-sketch} and since $s=s'$, it follows that $h=h^*$ and this is what we wanted to prove.

We proceed with proving part 2 of the theorem. We use the notion of an \emph{analytic function} on $\mathbb{R}^d$: that is a function $f \colon \mathbb{R}^d\to\mathbb{R}$ such that at any $x_0 \in \mathbb{R}^d$, the Taylor series of $f$ centered at $x_0$ converges for all $x \in \mathbb{R}^d$ to $f(x)$. We use the property that an analytic function is uniquely determined by its value on any open subset:
\emph{If $f$ and $g$ are analytic functions that identify in some open subset $U \subset \mathbb{R}^d$ then $f=g$ everywhere.} We prove this statement in the remainder of this paragraph, as follows: 
Represent $f$ and $g$ as Taylor series around some $x_0 \in U$. The Taylor series of $f$ and $g$ identify: indeed, these series are functions of the derivatives of $f$ and $g$ which are functions of only the values in $U$. Since $f$ and $g$ equal their Taylor series, they are equal.

Next, we will show that for any diffusion process that is defined by Eq.~\eqref{eq:forward}, the probability density of $p(x,t_0)$ at any time $t_0>0$ is analytic as a function of $x$. Recall that the distribution of $x_0$ is defined in Eq.~\eqref{eq:forward} as $p_0$ and it holds that the distribution of $x_{t_0}$ is obtained from $p_0$ by adding a Gaussian noise $N(0,\sigma_t^2I)$ and its density at any $x$ equals
\[
p(x,t_{0}) =
\int_{a \in \mathbb{R}^d} \frac{1}{\sqrt{2\pi}\sigma_{t_0}} \exp\l(-\frac{(x-a)^2}{2\sigma_t^2} \r) dp_0(a).
\]
Since the function $\exp(-(x-a)^2/(2\sigma_t^2))$ is analytic, one could deduce that $p(x,t_{0})$ is also analytic. Further, $p(x,t_{0})>0$ for all $x$ which implies that there is no singularity for $\log p(x,t_0)$ which can be used to deduce that $\log p(x,t_0)$ is also analytic and further that $\nabla_x \log p(x,t_0)$ is analytic as well.

We use the first part of the theorem to deduce that $s$ is the score function of some diffusion process hence it is analytic. By assumption, $s$ identifies with some target score function $s^*$ in some open subset $U \subseteq \mathbb{R}^d$ at some $t_0$, which, by the fact that $s(x,t_0)$ and $s^*(x,t_0)$ are analytic, implies that $s(x,t_0)=s^*(x,t_0)$ for all $x$. Finally, since $s$ and $s^*$ both satisfy the PDE Eq.~\eqref{eq:PDE} and they satsify the same initial condition at $t_0$, it holds that by uniqueness of the PDE $s(x,t)=s^*(x,t)$ for all $x$ and $t$.
\end{proof}

\section{Method}
Theorem~\ref{thm:main} motivates enforcing the consistency property on the learned model.
We notice that the consistency equation Eq.~\eqref{eq:consistency} may be expensive to train for, because it requires one to generate whole trajectories. Rather, we use the equivalent Martingale assumption of Lemma~\ref{lem:martingale}, which can be observed locally with only partial trajectories:\footnote{According to Lemma~\ref{lem:martingale}, in order to completely train for Property~\ref{prop:1}, one has to also enforce $h(x,0)=x$, however, this is taken care from 
the denoising score matching objective Eq.~\eqref{eq:dsm}.}
We suggest the following loss function, for some fixed $t,t'$ and $x$:
\[
L^2_{t,t',x}(\theta) = \l(\E_\theta[h_\theta(x_{t'},t')\mid x_t=x] - h_\theta(x,t)\r)^2/2,
\]
where the expectation $\E_\theta[\cdot \mid x_t=x]$ is taken according to process Eq.~\eqref{eq:sde-h} parameterized by $h_\theta$ with the initial condition $x_t=x$.
Differentiating this expectation, one gets the following (see Section~\ref{sec:loss-der} for full derivation):
\begin{multline*}
\nabla L^2_{t,t',x}(\theta)
= 
\E_\theta\l[h_\theta(x_{t'},t') - h_\theta(x_t,t) \mid x_t=x\r]^\top
\E_\theta\bigg[
h_\theta(x_{t'},t') \nabla_\theta \log \l(p_\theta(x_{t'} \mid x_t=x)\r)
+ \\
\nabla_\theta h_\theta(x_{t'},t') - \nabla_\theta h_\theta(x_t,t) ~\bigg|~ x_t=x\bigg],
\end{multline*}
where $p_\theta$ corresponds to the same probability measure where the expectation $\E_\theta$ is taken from and  $\nabla_\theta h_\theta$ corresponds to the Jacobian matrix of $h_\theta$ where the derivatives are taken with respect to $\theta$.
Notice, however, that computing the expectation accurately might require a large number of samples. Instead, it is possible to obtain a stochastic gradient of this target by taking two samples, $x_{t'}$ and $x_{t'}$, independently, from the conditional distribution of $x_{t'}$ conditioned on $x_t = x$ and replace each of the two expectations in the formula above with one of these two samples.

We further notice the gradient of the consistency loss can be written as
\begin{align*}
\nabla_\theta L^2_{t,t',x}(\theta)
= \frac{1}{2}\nabla_\theta \l\| 
\E_\theta[h_\theta(x_{t'},t')] - h_\theta(x,t)
\r\|^2 +  \E_\theta\l[h_\theta(x_{t'},t') - h_\theta(x,t)\r]^\top
\E_\theta\l[]
\nabla_\theta \log \l(p(x_{t'})\r)
h_\theta(x_{t'},t')\r]
\end{align*}
In order to save on computation time, we trained by taking gradient steps with respect to only the first summand in this decomposition and notice that if the consistency property is preserved then this term becomes zero, which implies that no update is made, as desired.

It remains to determine how to select $t,t'$ and $x_{t'}$. Notice that $t$ has to vary throughout the whole range of $[0,1]$ whereas $t'$ can either vary over $[0,t]$, however, it sufficient to take $t' \in [t-\epsilon,t]$. However, the further away $t$ and $t'$ are, we need to run more steps of the reverse SDE to avoid large discretization errors. Instead, we enforce the property only on small time windows using that consistency over small intervals implies global consistency. We notice that $x_t$ can be chosen arbitrarily and two possible choises are to sample it from the target noisy distribution $p_t$ or from the model.
\begin{remark}
It is important to sample $x_{t'}$ conditioned on $x_t$ according to the specific SDE Eq.~\eqref{eq:sde-h}. While a variety of alternative SDEs exist which preserve the same marginal distribution at any $t$, they might not preseve the conditionals.
\end{remark}

\section{Experiments}
For all our experiments, we rely on the official open-sourced code and the training and evaluation hyper-parameters from the paper ``\textit{Elucidating the Design Space of Diffusion-Based
Generative Models}''~\citep{karras2022elucidating} that, to the best of our knowledge, holds the current state-of-the-art on conditional generation on CIFAR-10 and unconditional generation on CIFAR-10, AFHQ (64x64 resolution), FFHQ (64x64 resolution).  We refer to the models trained with our regularization as ``CDM (Ours)'' and to models trained with vanilla Denoising Score Matching (DSM) as ``EDM'' models.  ``CDM'' models are trained with the weighted objective:
\begin{gather}
L_{\lambda}^{\mathrm{ours}}(\theta) = \E_{t}\bigg[\E_{x_0 \sim p_0, x_t \sim \mathcal N(x_0, \sigma_t^2 I_d)} L^1_{t, x_t, x_0}(\theta) + \nonumber  \lambda \E_{x_t \sim p_t}\mathbb E_{t' \sim \mathcal U[t-\epsilon, t]} L^2_{t,t',x_t}(\theta)\bigg]\ ,
\end{gather}
while the ``EDM'' models are trained only with the first term of the outer expectation. We also denote in the name whether the models have been trained with the Variance Preserving (VP)~\cite{ncsnv3, ddpm} or the Variance Exploding~\cite{ncsnv3, ncsnv2, ncsn}, e.g. we write EDM-VP. Finally, for completeness, we also report scores from the models of \citet{ncsnv3}, following the practice of the EDM paper. We refer to the latter baselines as ``NCSNv3'' baselines.

We train diffusion models, with and without our regularization, for conditional generation on CIFAR-10 and unconditional generation on CIFAR-10 and AFHQ (64x64 resolution). 
For the re-trained models on CIFAR-10, we use exactly the same training hyperparameters as in \citet{karras2022elucidating} and we verify that our re-trained models match (within $1\%$) the FID numbers mentioned in the paper. For AFHQ, we had to drop the batch size from the suggested value of $512$ to $256$ to fit in memory, which increased the FID from $1.96$ (reported value) to $2.29$. All models were trained for $200$k iterations, as in \citet{karras2022elucidating}. Finally, we retrain a baseline model on FFHQ for $150$k iterations and we finetune it for $5$k steps using our proposed objective.

\paragraph{Implementation Choices and Computational Requirements.} As mentioned, when enforcing the Consistency Property, we are free to choose $t'$ anywhere in the interval $[0, t]$. When $t, t'$ are far away, sampling $x_t'$ from the distribution $p^\theta_{t'}(x_t'|x_t)$ requires many sampling steps (to reduce discretization errors). Since this needs to be done for every Gradient Descent update, the training time increases significantly. Instead, we notice that local consistency implies global consistency. Hence, we first fix the number of sampling steps to run in every training iteration and then we sample $t'$ uniformly in the interval $[t-\epsilon, t]$ for some specified $\epsilon$. For all our experiments, we fix the number of sampling steps to $6$ which roughly increases the training time needed by $1.5$x. We train all our models on a DGX server with 8 A$100$ GPUs with $80$GBs of memory each. 

\subsection{Consistency Property Testing}
We are now ready to present our results. The first thing that we check is whether regularizing for the Consistency Property actually leads to models that are more consistent. Specifically, we want to check that the model trained with $L_{\lambda}^{\mathrm{ours}}$ achieves lower consistency error, i.e. lower $L^2_{t,t',x_t}$. To check this, we do the following two tests: i) we fix $t=1$ and we show how $L_{t, t', x_t}^2$ changes as $t'$ changes in $[0, 1]$, ii) we fix $t'=0$ and we show how the loss is changing as you change $t$ in $[0, 1]$. Intuitively, the first test shows how the violation of the consistency property splits across the sampling process and the second test shows how much you finally ($t'=0$) violate the property if the violation started at time $t$. The results are shown in Figures \ref{fig:afhq_test_1}, \ref{fig:afhq_test_2}, respectively, for the models trained on AFHQ. We include additional results for CIFAR-10, FFHQ in Figures \ref{fig:cifar10_test_1}, \ref{fig:cifar10_test_2}, \ref{fig:ffhq_test_1}, \ref{fig:ffhq_test_2} of the Appendix. As shown, indeed regularizing for the Consistency Loss drops the $L^2_{t,t',x_t}$ as expected. 

\begin{figure}[ht]
  \centering
  \begin{subfigure}[b]{0.45\textwidth}
    \includegraphics[width=\textwidth]{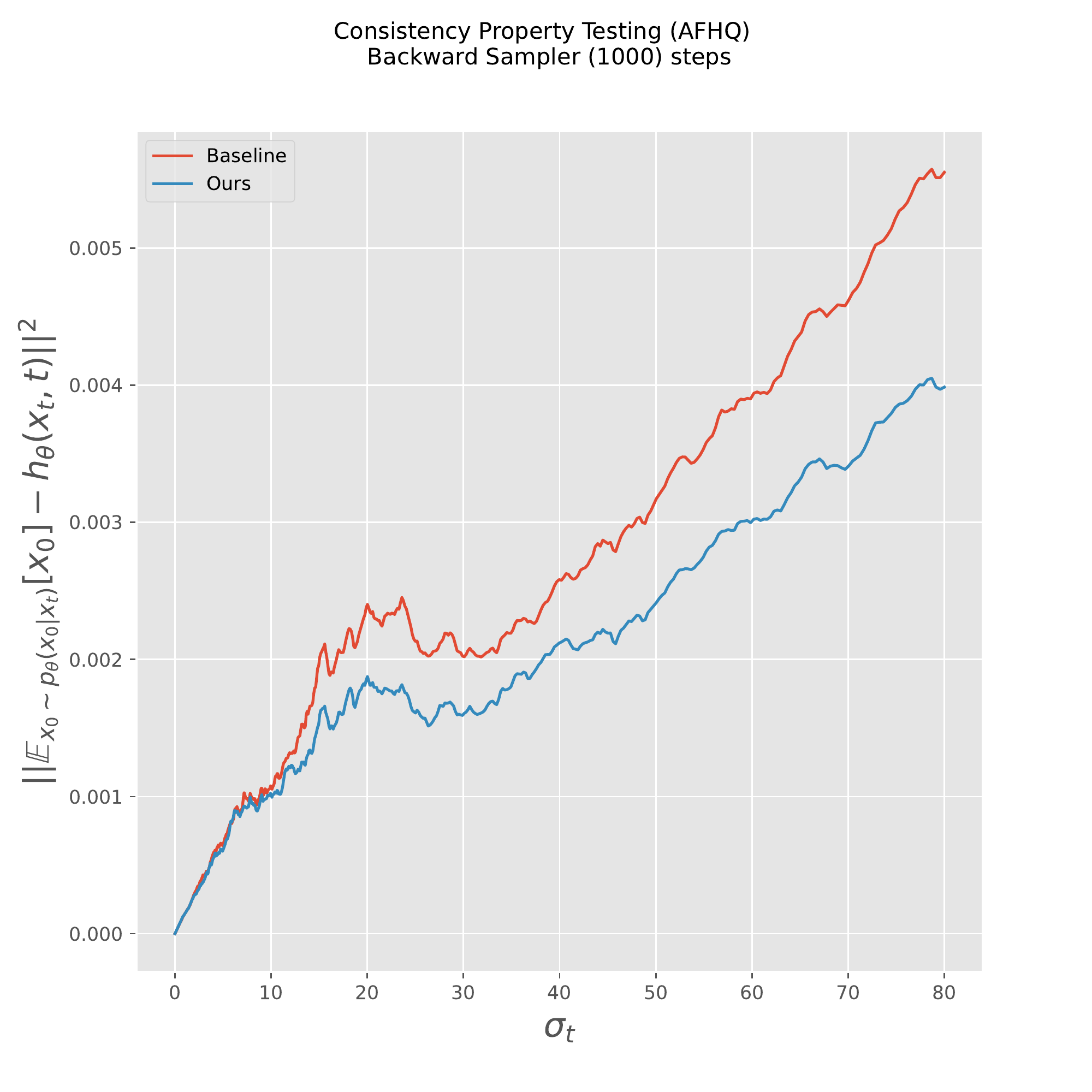}
    \caption{Consistency Property Testing on AFHQ. The plot illustrates how the Consistency Loss, $L^2_{t, t', x_t}$, behaves for $t'=0$, as $t$ changes.}
    \label{fig:afhq_test_1}
  \end{subfigure}
  \hfill
  \begin{subfigure}[b]{0.45\textwidth}
    \includegraphics[width=\textwidth]{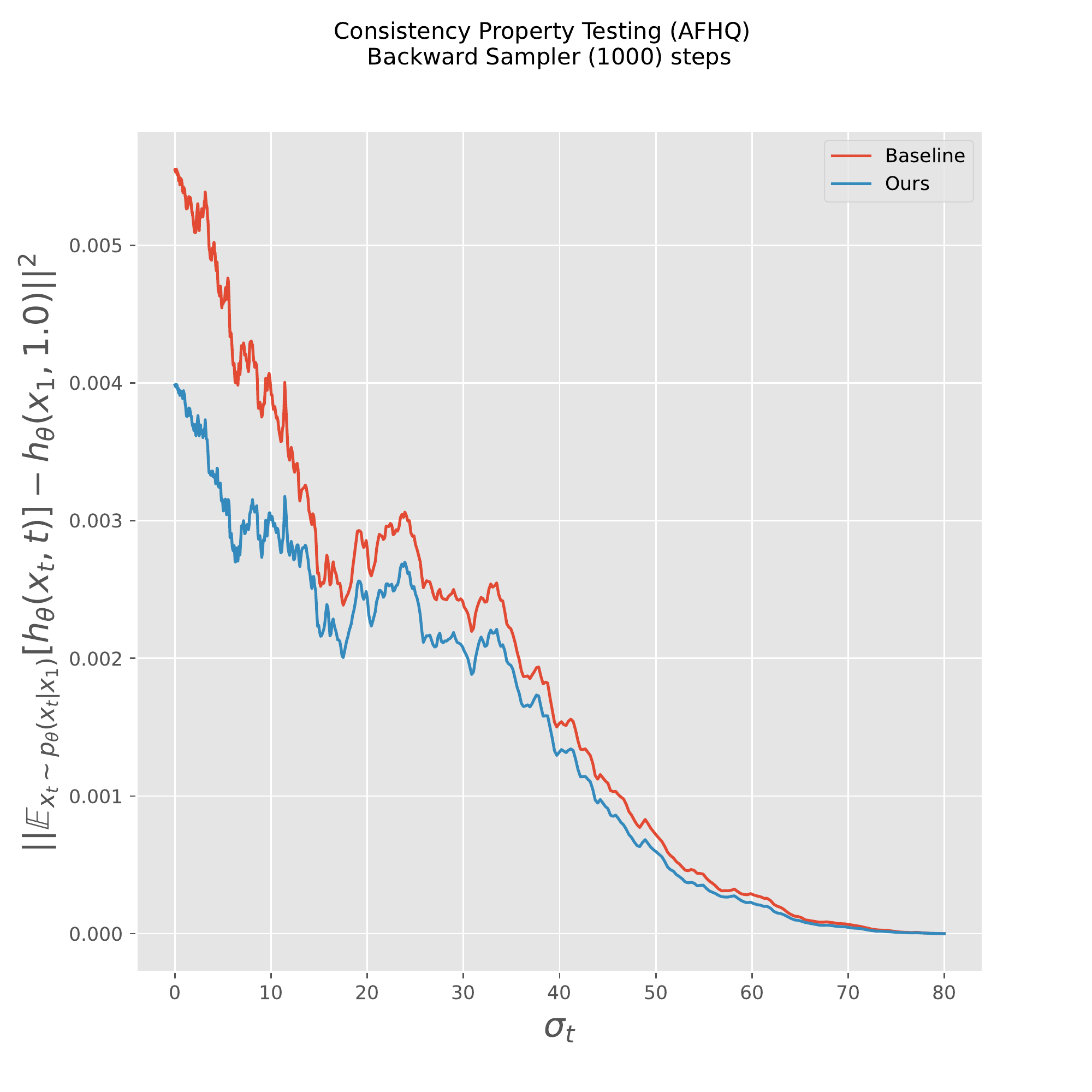}
    \caption{Consistency Property Testing on AFHQ. The plot illustrates how the Consistency Loss, $L^2_{t, t', x_t}$, behaves for $t=0$, as $t'$ changes.}
    \label{fig:afhq_test_2}
  \end{subfigure}
  \caption{Consistency Property Testing on AFHQ.}
  \label{fig:afhq_tests}
\end{figure}

\begin{figure}[ht]
  \centering
  \begin{subfigure}[b]{0.45\textwidth}
    \includegraphics[width=\textwidth]{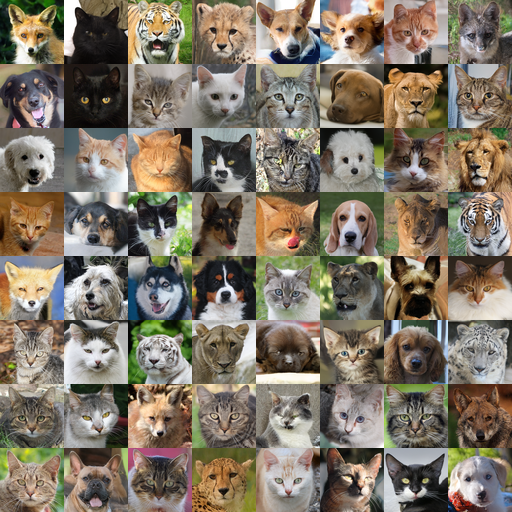}
    \caption{Uncurated images by our model trained on AFHQ. FID: $2.21$, NFEs: $79$.}
    \label{fig:afhq_generations}
  \end{subfigure}
  \hfill
  \begin{subfigure}[b]{0.45\textwidth}
    \includegraphics[width=\textwidth]{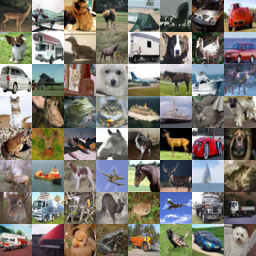}
    \caption{\footnotesize{Uncurated images by our conditional CIFAR-10 model. FID: $1.77$, NFEs: $35$.}}
    \label{fig:cifar10_generations}
  \end{subfigure}
  \caption{Comparison of uncurated images generated by two different models.}
  \label{fig:generations_comparison}
\end{figure}



\noindent \textbf{Performance.}
\begin{table*}[!htp]
\centering
\begin{tabular}{|c|c|c|c|c|c|c|c|c|}
\hline
 Model &  & 30k & 70k & 100k & 150k & 180k & 200k & Best \\
\hline
\textbf{CDM-VP (Ours)} &  \multirow{6}{*}{AFHQ} & \textbf{3.00} & 2.44 & \textbf{2.30} & \textbf{2.31} & \textbf{2.25} & \textbf{2.44} & 2.21 \\
EDM-VP (retrained) &  & 3.27 & \textbf{2.41} & 2.61 & 2.43 & 2.29 & 2.61 & 2.26 \\ 
EDM-VP (reported)$^*$\footnote{The models with $*$ have been trained with a larger batch size. We could not fit this batch size in memory. Our training with smaller batch size hurt the performance.} &  & &  & &  & &  &   \textbf{1.96} \\
EDM-VE (reported)$^*$ &  & &  & &  & &  &   2.16 \\
NCSNv3-VP (reported)$^*$ &  & &  & &  & &  &   2.58 \\
NCSNv3-VE (reported)$^*$ &  & &  & &  & &  &   18.52 \\
\hline
\textbf{CDM-VP (Ours)} & \multirow{6}{*}{CIFAR10 (cond.)} & \textbf{2.44} & \textbf{1.94} & \textbf{1.88} & 1.88 & \textbf{1.80} & \textbf{1.82} & \textbf{1.77} \\
EDM-VP (retrained) &  & 2.50 & 1.99 & 1.94 & \textbf{1.85} & 1.86 & 1.90 & 1.82 \\
EDM-VP (reported) &  & &  & &  & &  &   1.79 \\
EDM-VE (reported) &  & &  & &  & &  &   1.79 \\
NCSNv3-VP (reported) &  & &  & &  & &  &   2.48 \\
NCSNv3-VE (reported) &  & &  & &  & &  &   3.11 \\
\hline
\textbf{CDM-VP (Ours)} & \multirow{6}{*}{CIFAR10 (uncond.)} & \textbf{2.83} & \textbf{2.21} & \textbf{2.14} & \textbf{2.08} & \textbf{1.99} & \textbf{2.03} & \textbf{1.95} \\
EDM-VP (retrained) &  & 2.90 & 2.32 & 2.15 & 2.09 & 2.01 & 2.13 & 2.01 \\
EDM-VP (reported) &  & &  & &  & &  & 1.97   \\
EDM-VE (reported) &  & &  & &  & &  &   1.98 \\
NCSNv3-VP (reported) &  & &  & &  & &  &  3.01  \\
NCSNv3-VE (reported) &  & &  & &  & &  &  3.77 \\
\hline
\end{tabular}
\caption{FID results for deterministic sampling, using the \citet{karras2022elucidating} second-order samplers. For the CIFAR-10 models, we do $35$ function evaluations and for AFHQ $79$.}
\label{tab:fid_results}
\end{table*}
We evaluate performance of the models trained from scratch. Following the methodology of \citet{karras2022elucidating}, we generate $150$k images from each model and we report the minimum FID computed on three sets of $50$k images each. We keep checkpoints during training and we report FID for $30\mathrm{k}, 70\mathrm{k}, 100\mathrm{k}, 150\mathrm{k}, 180\mathrm{k}$ and $200\mathrm{k}$ iterations in Table \ref{tab:fid_results}. We also report the best FID found for each model, after evaluating checkpoints every 5k iterations (i.e. we evaluate $40$ models spanning $200$k steps of training). As shown in the Table, the proposed consistency regularization yields  improvements throughout the training. In the case of CIFAR-10 (conditional and unconditional) where the re-trained baseline was trained with exactly the same hyperparameters as the models in the EDM~\cite{karras2022elucidating} paper, our CDM models achieve a new state-of-the-art. 

We further show that our consistency regularization can be applied on top of a pre-trained model. Specifically, we train a baseline EDM-VP model on FFHQ $64x64$ for $150$k using vanilla Denoising Score Matching. We then do $5$k steps of finetuning, with and without our consistency regularization and we measure the FID score of both models. The baseline model achieves FID $2.68$ while the model finetuned with consistency regularization achieves $2.61$. This experiment shows the potential of applying our consistency regularization to pre-trained models, potentially even at large scale, e.g. we could apply this idea with text-to-image models such as Stable Diffusion~\cite{latent_diffusion}. We leave this direction for future work.

Uncurated samples from our best models on AFHQ, CIFAR-10 and FFHQ are given in Figures \ref{fig:afhq_generations}, \ref{fig:cifar10_generations}, \ref{fig:ffhq_generations}. One benefit of the deterministic samplers is the unique identifiability property~\citep{ncsnv3}. Intuitively, this means that by using the same noise and the same deterministic sampler, we can directly compare visually models that might have been trained in completely different ways. We select a couple of images from Figure \ref{fig:afhq_generations} (AFHQ generations) and we compare the generated images from our model with the ones from the EDM baseline for the same noises. The results are shown in Figure \ref{fig:visual_comp}. As shown, the consistency regularization fixes several geometric inconsistencies for the picked images. We underline that the shown images are examples for which consistency regularization helped and that potentially there are images for which the baseline models give more realistic results.

\begin{figure}
\centering
\begin{subfigure}{0.7\linewidth}
\includegraphics[width=\linewidth]{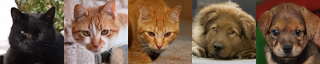}
\end{subfigure}
\begin{subfigure}{0.7\linewidth}
\includegraphics[width=\linewidth]{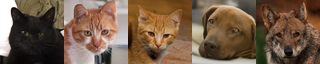}
\end{subfigure}
\caption{Visual comparison of EDM model (top) and CDM model (Ours, bottom) using deterministic sampling initiated with the same noise. As seen, the consistency regularization fixes several geometric inconsistencies and artifacts in the generated images.}
\label{fig:visual_comp}
\end{figure}

\noindent \textbf{Ablation Study for Theoretical Predictions.}
One interesting implication of Theorem \ref{thm:main} is that it suggests that we only need to learn the score perfectly on some fixed $t_0$ and then the consistency property implies that the score is learned everywhere (for all $t$ and in the whole space). This motivates the following experiment: instead of using as our loss the weighted sum of DSM and our consistency regularization for all $t$, we will not use DSM for $t \leq t_{\mathrm{threshold}}$, for some $t_{\mathrm{threshold}}$ that we test our theory for. 

We pick $t_{\mathrm{threshold}}$ such that for $20\%$ of the diffusion (on the side of clean images), we do not train with DSM. For the rest $80\%$ we train with both DSM and our consistency regularization. Since this is only an ablation study, we train for only $10$k steps on (conditional) CIFAR-10. We report FID numbers for three models: i) training with only DSM, ii) training with DSM and consistency regularization everywhere, iii) training with DSM for $80\%$ of times $t$ and consistency regularization everywhere. In our reported models, we also include FID of an early stopped sampling of the latter model, i.e. we do not run the sampling for $t < t_{\mathrm{threshold}}$ and we just output $h_{\theta}(x_{t_\mathrm{threshold}}, t_{\mathrm{threshold}})$. The numbers are summarized in Table \ref{tab:ablation_table}. As shown, the theory is predictive since early stopping the generation at time $t$ gives significantly worse results than continuing the sampling through the times that were never explicitly trained for approximating the score (i.e. we did not use DSM for those times). That said, the best results are obtained by combining DSM and our consistency regularization everywhere, which is what we did for all the other experiments in the paper. 

\begin{table}[]
    \centering
    \begin{tabular}{c|c}
        Model & FID \\ \hline
         EDM (baseline) &  5.81 \\
         CDM, all times $t$ & $5.45$ \\
         CDM, for some $t$ & $6.59$ \\
         CDM, for some $t$, early stopped sampling & $14.52$
    \end{tabular}
    \caption{Ablation study on removing the DSM loss for some $t$. Table reports FID results after $10$k steps of training in CIFAR-10.}
    \label{tab:ablation_table}
\end{table}

\section{Related Work}
The fact that imperfect learning of the score function introduces a shift between the training and the sampling distribution has been well known. \citet{chen2022sampling} analyze how the $l_2$ error in the approximation of the score function propagates to Total Variation distance error bounds between the true and the learned distribution. Several methods for mitigating this issue have been proposed, but the majority of the attempts focus on changing the sampling process~\cite{ncsnv3, karras2022elucidating, jolicoeur2021gotta, sehwag2022generating}. A related work is the Analog-Bits paper~\cite{chen2022analog} that conditions the model during training with past model predictions.

\citet{karras2022elucidating} discusses potential violations of invariances, such as the non-conservativity of the induced vector field, due to imperfect score matching. However, they do not formally test or enforce this property. 
\citet{lai2022regularizing} study the problem of regularizing diffusion models to satisfy the Fokker-Planck equation. While we show in Theorem~\ref{thm:main} that perfect conservative training enforces the Fokker-Planck equation, we notice that their training method is different: they suggest to enforce the equation locally by using the finite differences method to approximate the derivatives. Further, they do not train on drifted data. Instead, we notice that our consistency loss is well suited to handle drifted data since it operates across trajectories generated by the model. 
Finally, they show benchmark improvements on MNIST whereas we achieve state-of-the-art performance and benchmark improvements in more challenging datasets such as CIFAR-10 and AFHQ.

\section{Conclusions and Future Work}
We proposed a novel objective that enforces the trained network to have self-consistent predictions over time. We optimize this objective with points from the sampling distribution, effectively reducing the sampling drift observed in prior empirical works. Theoretically, we show that the consistency property implies that we are sampling from the reverse of some diffusion process. Together with the assumption that the network has learned perfectly the score for some time $t_0$ and some open set $U$, we can prove that the consistency property implies that we learn the score perfectly everywhere. Empirically, we use our objective to obtain state-of-the-art for CIFAR-10 and baseline improvements on AFHQ and FFHQ.

There are limitations of our method and several directions for future work. The proposed regularization increases the training time by approximately $1.5$x. It would be interesting to explore how to enforce consistency in more effective ways in future work. Further, our method does not test nor enforce that the induced vector-field is conservative, which is a key theoretical assumption. Our method guarantees only indirectly improve the performance in the samples from the learned distribution by enforcing some invariant. Finally, our theoretical result assumes perfect learning of the score in some subset of $\mathbb R^d$. An important next step would be to understand how errors propagate if the score-function is only approximately learned.

\section{Acknowledgments}
This research has been supported by NSF Grants CCF 1763702,
AF 1901292, CNS 2148141, Tripods CCF 1934932, IFML CCF 2019844, the Texas Advanced Computing Center (TACC) and research gifts by Western Digital, WNCG IAP, UT Austin Machine Learning Lab (MLL), Cisco and the Archie Straiton Endowed Faculty Fellowship.
Giannis Daras has been supported by the Onassis Fellowship, the Bodossaki Fellowship and the Leventis Fellowship.
Constantinos Daskalakis has been supported by NSF Awards CCF-1901292, DMS-2022448 and DMS2134108, a Simons Investigator Award, the Simons Collaboration on the Theory of Algorithmic Fairness and a DSTA grant.

\bibliography{citations}
\bibliographystyle{apalike}

\newpage
\appendix
\onecolumn
\section{Proof of Theorem~\ref{thm:main}} \label{sec:pr-main}
In Section~\ref{sec:main-pr-prel} we present some preliminaries to the proof, in Section~\ref{sec:main-pr} we include the proof, with proofs of some lemmas ommitted and in the remaining sections we prove these lemmas.

\subsection{Preliminaries} \label{sec:main-pr-prel}

\paragraph{Preliminaries on diffusion processes}
In the next definition we define for a function $F \colon \mathbb{R}^d \to \mathbb{R}^d$ its Jacobian $J_F$, its divergence $\nabla \cdot F$ and its Laplacian $\triangle F$ that is computed separately on each coordinate of $F$:
\begin{definition}
Given a function $F = (f_1,\dots,f_n) \colon \mathbb{R}^d \to \mathbb{R}^d$, denote by $J_F \colon \mathbb{R}^d \to \mathbb{R}^{d\times d}$ its Jacobian:
\[
(J_F)_{ij} = \frac{\partial f_i(x)}{\partial x_j}.
\]
The \emph{divergence} of $F$ is defined as
\[
\nabla \cdot F(x) := \sum_{i=1}^n \frac{\partial f_i(x)}{\partial x_i}.
\]
Denote by $\triangle F \colon \mathbb{R}^d \to \mathbb{R}^d$ the function whose $i$th entry is the Laplacian of $f_i$:
\[
(\triangle F(x))_i = \sum_{j=1}^n
\frac{\partial^2 f_i(x)}{\partial x_j^2}.
\]
If $F$ is a function of both $x \in \mathbb{R}^d$ and $t \in \mathbb{R}$, then $J_F$, $\triangle f$ and $\nabla \cdot F$ correspond to $F$ as a function of $x$, whereas $t$ is kept fixed. In particular,
\[
(J_F(x,t))_{ij} = \frac{\partial f_i(x,t)}{\partial x_j}, \quad
(\triangle F(x,t))_i = \sum_{j=1}^n
\frac{\partial^2 f_i(x,t)}{\partial x_j^2}, \quad 
\nabla \cdot F = \sum_{i=1}^n \frac{\partial f_i(x,t)}{\partial x_i}.
\]
\end{definition}

We use the celebrated Ito's lemma and some of its immediate generalizations:

\begin{lemma}[Ito's Lemma] \label{lem:ito}
Let $x_t$ be a stochastic process $x_t \in \mathbb{R}^d$, that is defined by the following SDE:
\[
dx_t = \mu(x_t,t) dt + g(t) dB_t,
\]
where $B_t$ is a standard Brownian motion. Let $f \colon \mathbb{R}^d \times \mathbb{R} \to \mathbb{R}$. Then,
\[
d f(x_t,t) = \l(\frac{df}{dt} + \nabla_x f^\top \mu(x_t,t)
+ \frac{g(t)^2}{2} \triangle f\r)dt + g(t) \nabla_x f^\top dB_t.
\]
Further, if $F \colon \mathbb{R}^d \times \mathbb{R} \to \mathbb{R}^d$ is a multi-valued function, then
\[
d F(x_t,t) = \l(\frac{dF}{dt} + J_F \mu
+ \frac{g(t)^2}{2} \triangle F\r)dt + g(t) J_F dB_t.
\]
Lastly, if $x_t$ is instead defined with a reverse noise,
\[
dx_t = \mu(x_t,t) dt + g(t) d\B_t,
\]
then the multi-valued Ito's lemma is modified as follows:
\begin{equation} \label{eq:ito-gen}
d F(x_t,t) = \l(\frac{dF}{dt} + J_F \mu
- \frac{g(t)^2}{2} \triangle F\r)dt + g(t) J_F d\B_t.
\end{equation}
\end{lemma}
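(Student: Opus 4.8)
The plan is to establish the three displays of the lemma in the order stated: first the scalar forward-time formula, then the multi-valued forward formula as an immediate corollary, and finally the reverse-time formula \eqref{eq:ito-gen} by reducing it to the forward case via a time-reversal substitution. Throughout I would treat the passage from the formal ``multiplication table'' of stochastic differentials to rigorous statements about quadratic variation and Ito integrals as standard, invoking it from the usual references rather than reproving it.

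For the scalar case, I would use the standard infinitesimal Taylor-expansion argument. Write the increment $f(x_{t+dt},t+dt)-f(x_t,t)$ and expand to second order in the increments, retaining every term of order $dt$:
\[
df = \frac{\partial f}{\partial t}\,dt + \nabla_x f^\top dx_t + \frac12 \sum_{i,j} \frac{\partial^2 f}{\partial x_i \partial x_j}\, dx^{(i)} dx^{(j)} + o(dt).
\]
Substituting $dx_t = \mu\,dt + g(t)\,dB_t$ and applying the rules $dt\,dt = 0$, $dt\,dB^{(i)} = 0$, and $dB^{(i)}dB^{(j)} = \delta_{ij}\,dt$ (the last using that the driving noise is the isotropic $g(t) I_d$, so distinct coordinates are independent), all cross terms drop and the quadratic term collapses to $\frac{g(t)^2}{2}\sum_i \frac{\partial^2 f}{\partial x_i^2}\,dt = \frac{g(t)^2}{2}\triangle f\,dt$. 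The first-order terms contribute $\nabla_x f^\top\mu\,dt + g(t)\nabla_x f^\top dB_t$, and collecting everything yields the scalar identity.

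The multi-valued forward formula then follows with no extra work by applying the scalar result to each coordinate $f_i$ of $F$ and stacking. Since the $i$-th row of $J_F$ is $\nabla_x f_i^\top$, the stacked drift terms $\nabla_x f_i^\top\mu$ assemble into $J_F\mu$, the stacked diffusion coefficients into $g(t) J_F\,dB_t$, and the per-coordinate Laplacians into $\triangle F$, producing the stated vector identity with the $+\frac{g(t)^2}{2}\triangle F$ correction.

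Finally, for the reverse-time identity \eqref{eq:ito-gen}, I would apply a time reversal: set $s = c-t$ and $y_s = x_{c-s}$, so that the reverse-time SDE becomes an ordinary forward SDE in $s$ driven by a standard Brownian motion, with drift $-\mu$. Applying the already-proved forward vector formula to $F(y_s, c-s)$ yields a $+\frac{g(t)^2}{2}\triangle F$ term together with a drift built from $-\mu$; converting back through $ds = -dt$ restores the drift to $+J_F\mu$ while flipping the Laplacian term to $-\frac{g(t)^2}{2}\triangle F$ and reassembling the diffusion as $g(t) J_F\,d\B_t$. The hard part here is not conceptual but bookkeeping: there are two independent sources of sign change in the reverse-time step --- negating the drift under the substitution $y_s = x_{c-s}$, and the factor $ds = -dt$ --- and these must be tracked so that the drift sign is ultimately restored while the Laplacian sign genuinely flips, since this flip is exactly the feature distinguishing \eqref{eq:ito-gen} from the forward formula. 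A secondary point requiring care is the rigorous justification of the quadratic-variation heuristic, which I would defer to the standard construction of the Ito integral.
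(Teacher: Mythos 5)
Your proposal is correct, but note that the paper itself offers no proof to compare against: Lemma~\ref{lem:ito} is stated in the appendix preliminaries as ``the celebrated Ito's lemma and some of its immediate generalizations'' and is invoked as a known result. Your sketch fills this in soundly. The scalar forward case is the textbook Taylor/quadratic-variation argument, the vector case is indeed just coordinate-wise stacking, and, most importantly, you handle the one genuinely delicate point---the sign bookkeeping in the reverse-time case---correctly: under $y_s = x_{c-s}$ the drift picks up one sign flip from the substitution and a second from $ds=-dt$, so it is restored to $+J_F\mu$, while the second-order term flips only once, yielding $-\tfrac{g(t)^2}{2}\triangle F$. (An equivalent, slightly more direct route: expand $F(x_{t-\delta},t-\delta)$ around $(x_t,t)$ using backward adaptedness of the increment $\B_t-\B_{t-\delta}$; the Taylor quadratic term always enters with a $+$ regardless of time direction, so negating to form the forward-in-$t$ increment flips exactly the Laplacian term. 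Both arguments are standard and either would serve.) The only caveat is the one you flag yourself: the formal multiplication table $dB^{(i)}dB^{(j)}=\delta_{ij}\,dt$ and the existence/adaptedness of the backward Ito integral are deferred to references, which is appropriate given that the paper treats the lemma as folklore.
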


Lastly, we present the Fokker-Planck equation which states that the probability distribution that corresponds to diffusion processes satisfy a certain partial differential equation:
\begin{lemma}[Fokker-Planck equation] \label{lem:Fokker}
Let $x_t$ be defined by
\[
dx_t = \mu(x_t,t)dt + g(t) dB_t,
\]
where $x_t, \mu(x,t) \in \mathbb{R}^d$ and $B_t$ is a Brownian motion in $\mathbb{R}^d$.
Denote by $p(x,t)$ the density at point $x$ on time $t$. Then,
\[
\frac{\partial}{\partial t} p(x,t)
= - \nabla \cdot \l( 
\mu(x,t)p(x,t)
\r)
+ \frac{g(t)^2}{2}\triangle
p(x,t)
= - p \nabla \cdot \mu - \mu \nabla \cdot p
+ \frac{g(t)^2}{2}\triangle p.
\]
\end{lemma}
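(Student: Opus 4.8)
The plan is to prove the Fokker-Planck equation by the classical \emph{test-function} argument, feeding the scalar version of Ito's lemma (Lemma~\ref{lem:ito}) into an integration-by-parts identity. First I would fix an arbitrary smooth, compactly supported $\phi \colon \mathbb{R}^d \to \mathbb{R}$ and track the evolution of $\E[\phi(x_t)]$. Applying the scalar Ito formula to $\phi(x_t)$, and noting that $\phi$ has no explicit $t$-dependence so $\partial \phi/\partial t = 0$, gives
\[
d\phi(x_t) = \l(\nabla \phi^\top \mu(x_t,t) + \frac{g(t)^2}{2} \triangle \phi\r) dt + g(t)\, \nabla\phi^\top dB_t .
\]
Taking expectations annihilates the Brownian term, since the stochastic integral is a martingale with zero mean, leaving $\frac{d}{dt}\E[\phi(x_t)] = \E\big[\nabla\phi^\top \mu + \tfrac{g(t)^2}{2}\triangle\phi\big]$.

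Next I would rewrite both sides as integrals against the density $p(\cdot,t)$. The left-hand side becomes $\frac{d}{dt}\int \phi(x)\, p(x,t)\,dx = \int \phi(x)\,\frac{\partial p}{\partial t}(x,t)\,dx$ after differentiating under the integral sign. On the right, I integrate by parts to transfer the derivatives off of $\phi$ and onto $p$ and $\mu$: the drift term gives $\int \nabla\phi^\top \mu\, p\,dx = -\int \phi\, \nabla\cdot(\mu p)\,dx$, while the diffusion term, after integrating by parts twice, gives $\int \tfrac{g^2}{2}\triangle\phi\, p\,dx = \int \tfrac{g^2}{2}\phi\, \triangle p\,dx$. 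Because $\phi$ is compactly supported, every boundary term vanishes.

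Equating the two expressions yields $\int \phi\l(\frac{\partial p}{\partial t} + \nabla\cdot(\mu p) - \frac{g^2}{2}\triangle p\r)dx = 0$ for \emph{every} test function $\phi$, from which I conclude that the bracketed integrand vanishes identically, i.e.\ $\frac{\partial p}{\partial t} = -\nabla\cdot(\mu p) + \frac{g^2}{2}\triangle p$. Finally, expanding the divergence of the product via the product rule, $\nabla\cdot(\mu p) = p\,(\nabla\cdot\mu) + \mu^\top\nabla p$, recovers the second stated form (here $\mu^\top \nabla p$ is the quantity written informally as $\mu\,\nabla\cdot p$ in the statement).

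I expect the main obstacle to be purely the analytic bookkeeping rather than the algebra: differentiating under the integral sign, discarding the stochastic integral's expectation, and the vanishing of boundary terms in the integration by parts all rely on sufficient regularity and decay of $p$ and $\mu$, and the ``$\phi$ arbitrary implies pointwise equality'' step needs both sides to be continuous in $x$. In the diffusion setting of this paper these justifications are benign, since $p(\cdot,t)$ is a Gaussian convolution of $p_0$ and is therefore smooth with rapidly decaying derivatives, so all interchanges are legitimate; the only care needed is to state the growth/smoothness hypotheses under which the argument is carried out.
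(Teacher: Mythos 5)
Your proof is correct. Note that the paper itself never proves Lemma~\ref{lem:Fokker}: it is stated as a classical preliminary (alongside Ito's lemma, Lemma~\ref{lem:ito}) and used as a black box in the proof of Lemma~\ref{lem:diff-implies-PDE}, so there is no internal proof to compare against. Your test-function derivation is the standard one and is sound: apply the scalar Ito formula to a smooth compactly supported $\phi$, take expectations so the stochastic integral vanishes (legitimate here since $\nabla\phi$ is bounded), rewrite both sides against the density, integrate by parts once for the drift term and twice for the diffusion term with all boundary terms vanishing by compact support, and conclude pointwise equality from arbitrariness of $\phi$. The regularity caveats you flag (differentiation under the integral, continuity of the integrand for the final localization step) are genuine hypotheses but are benign in this paper's setting, where $p(\cdot,t)$ is a Gaussian convolution of a bounded-support measure for $t>0$ and hence smooth with rapidly decaying derivatives. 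You also correctly resolved the paper's abuse of notation: the term written $\mu \nabla \cdot p$ in the lemma statement only parses if read as $\mu^\top \nabla p$, i.e. via the product rule $\nabla\cdot(\mu p) = p\,(\nabla\cdot\mu) + \mu^\top \nabla p$, which is exactly the expansion in your final step.
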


\paragraph{Preliminaries on analytic functions}

\begin{definition}
	A function $f \colon \mathbb{R}^d \to \mathbb{R}$ is analytic on $\mathbb{R}^d$ if for any $x_0, x \in \mathbb{R}^d$, the Taylor series of $f$ around $x_0$, evaluated at $x$, converges to $f(x)$. We say that $F = (f_1,\dots,f_n) \colon \mathbb{R}^d \to \mathbb{R}^d$ is an analytic function if $f_i$ is analytic for all $i\in \{1,\dots,n\}$.
\end{definition}

The following holds:
\begin{lemma}\label{lem:analytic}
	If $F,G\colon \mathbb{R}^d\to\mathbb{R}^d$ are two analytic functions and if $F=G$ for all $x\in U$ where $U \subseteq \mathbb{R}^d$, $U\ne 0$, is an open set, then $F=G$ on all $\mathbb{R}^d$.
\end{lemma}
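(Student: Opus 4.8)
The plan is to reduce the vector-valued claim to a scalar statement and then exploit the fact that the notion of analyticity defined above is \emph{global}: the Taylor series of $f$ centered at any point $x_0$ is assumed to converge to $f(x)$ for \emph{all} $x \in \mathbb{R}^d$, not merely on some neighborhood of $x_0$. This is considerably stronger than ordinary local real-analyticity, and it is exactly what makes the identity theorem immediate, with no need for a connectedness or analytic-continuation argument.

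First I would set $h_i := f_i - g_i$ for each coordinate $i \in \{1,\dots,d\}$ and note that each $h_i$ is analytic: the Taylor series of a difference is the difference of the Taylor series, and termwise subtraction preserves convergence to the (difference of the) limit. Hence it suffices to prove the scalar statement that an analytic $h \colon \mathbb{R}^d \to \mathbb{R}$ vanishing on a nonempty open set $U$ must vanish identically, and then apply it to each $h_i$.

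Next, fix any $x_0 \in U$. Since $U$ is open, $h$ vanishes on an entire ball $B(x_0,r) \subseteq U$; because partial derivatives are determined by the local behaviour of $h$ near $x_0$, every partial derivative $\partial^\alpha h(x_0)$, over all multi-indices $\alpha$, equals zero. Consequently the Taylor series of $h$ centered at $x_0$ is the identically-zero series. I would then invoke analyticity in its global form: for every $x \in \mathbb{R}^d$, the value $h(x)$ equals that Taylor series evaluated at $x$, which we have just shown to be $0$. Therefore $h \equiv 0$ on all of $\mathbb{R}^d$, so $f_i = g_i$ everywhere for each $i$, and thus $F = G$.

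The only steps requiring any care — and the closest thing to an obstacle — are the two routine verifications that the difference of two analytic functions is again analytic under this definition, and that vanishing on an open neighbourhood of $x_0$ forces all higher-order derivatives at $x_0$ to vanish. Both follow directly from the definitions; in particular, no use of connectedness of $\mathbb{R}^d$ is needed, precisely because the hypothesis guarantees that the Taylor series recovers the function globally rather than only within a radius of convergence.
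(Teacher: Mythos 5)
Your proof is correct and follows essentially the same route as the paper's own sketch: both exploit the paper's \emph{global} definition of analyticity (the Taylor series at any $x_0 \in U$ converges to the function everywhere on $\mathbb{R}^d$) together with the observation that derivatives at $x_0$ are determined by values on the open set $U$, so the Taylor coefficients agree and no connectedness or analytic-continuation argument is required. Working with the difference $h_i = f_i - g_i$ rather than comparing the two series directly is only a cosmetic variation.
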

This is a well known result and a proof sketch was given in Section~\ref{sec:method}.

\paragraph{The heat equation.}
The following is a Folklore lemma on the uniqueness of the solutions to the heat equation:
\begin{lemma}\label{lem:heat-uniq}
Let $p$ and $p'$ be two continuous functions on $\mathbb{R}^d\times [t_0,1]$ that satisfy the heat equation
\begin{equation}\label{eq:heat}
\frac{\partial p}{\partial t}
= \frac{g(t)^2}{2} \triangle p.
\end{equation}
Further, assume that $p(\cdot,t_0) = p'(\cdot,t_0)$. Then, $p=p'$ for all $t \in [t_0,1]$.
\end{lemma}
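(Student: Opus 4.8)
The plan is to reduce the statement to the standard constant-coefficient heat equation and then exploit the decoupling that happens in Fourier space. First I would set $u := p - p'$. By linearity of \eqref{eq:heat}, $u$ is continuous on $\mathbb{R}^d\times[t_0,1]$, satisfies the same equation $\partial_t u = \frac{g(t)^2}{2}\triangle u$, and has vanishing initial data $u(\cdot,t_0)\equiv 0$. It then suffices to prove $u\equiv 0$.

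Next I would eliminate the time-dependent coefficient by a monotone time change. Since $g(t)^2 = d\sigma_t^2/dt$ and $\sigma_t$ is increasing, the map $\tau(t) := \tfrac12\int_{t_0}^t g(s)^2\,ds = \tfrac12(\sigma_t^2-\sigma_{t_0}^2)$ is a strictly increasing reparametrization of $[t_0,1]$. Writing $\tilde u(x,\tau) := u(x,t(\tau))$, the chain rule converts \eqref{eq:heat} into the standard heat equation $\partial_\tau \tilde u = \triangle \tilde u$ with $\tilde u(\cdot,0)\equiv 0$. This isolates the only genuine content: uniqueness for $\partial_\tau \tilde u = \triangle \tilde u$ from zero initial data.

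For that step I would pass to the spatial Fourier transform $\hat u(\xi,\tau) = \int_{\mathbb{R}^d} \tilde u(x,\tau)\, e^{-i\langle \xi,x\rangle}\,dx$. The heat equation becomes, for each fixed frequency $\xi$, the scalar linear ODE $\partial_\tau \hat u(\xi,\tau) = -|\xi|^2\,\hat u(\xi,\tau)$ with initial condition $\hat u(\xi,0)=0$; its unique solution is $\hat u(\xi,\tau)\equiv 0$, whence $\tilde u\equiv 0$ and therefore $u\equiv 0$ and $p=p'$. An alternative that avoids the transform is the energy method: $E(\tau):=\int \tilde u^2\,dx$ satisfies $E'(\tau) = -2\int |\nabla \tilde u|^2\,dx \le 0$ after integrating by parts, and $E(0)=0$ together with $E\ge 0$ forces $E\equiv 0$.

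The main obstacle is that heat-equation uniqueness on the unbounded domain $\mathbb{R}^d$ is false for arbitrary continuous solutions: the Tychonoff examples produce nonzero solutions with zero initial data that grow like $e^{c|x|^2}$. Both arguments above therefore implicitly require a mild growth/integrability control — the Fourier argument needs $\tilde u(\cdot,\tau)$ to be, say, integrable so that $\hat u$ is well defined, and the energy argument needs enough spatial decay to discard the boundary terms in the integration by parts. In the application of this lemma, $p$ and $p'$ are probability densities of diffusion processes and hence integrable with rapid decay, which both legitimizes the Fourier computation and rules out the pathological solutions; this is the point at which one must be careful rather than treat the statement as literally holding for every continuous pair $p,p'$.
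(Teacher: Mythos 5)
The paper never actually proves this lemma: it is labeled ``folklore'' and invoked twice (in the main proof of Theorem~\ref{thm:main} and at the end of the proof of Lemma~\ref{lem:uniq}) without argument, so there is no in-paper proof to compare against. Your route --- subtract the solutions to get $u=p-p'$ with zero data at $t_0$, eliminate the coefficient via the time change $\tau(t)=\tfrac12(\sigma_t^2-\sigma_{t_0}^2)$ so that $\partial_\tau \tilde u=\triangle\tilde u$, then conclude $\tilde u\equiv 0$ by the Fourier ODE $\partial_\tau\hat u(\xi,\tau)=-|\xi|^2\hat u(\xi,\tau)$ or by the energy identity $E'(\tau)=-2\int\|\nabla\tilde u\|^2\,dx$ --- is the standard textbook argument, and it is sound under the integrability/decay hypotheses you yourself flag (one needs $\tilde u$, and for the Fourier step also $\nabla\tilde u$ and $\triangle\tilde u$, to be integrable with enough decay to justify $\widehat{\triangle u}=-|\xi|^2\hat u$ and differentiation under the integral; for the energy step, enough decay to discard boundary terms).

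Your final paragraph is the most important part, and you should state it more forcefully: it is not a caveat about your proof but a counterexample to the lemma as written. Tychonoff's construction gives a smooth, nonzero solution of the standard heat equation on $\mathbb{R}^d\times[0,1]$ with identically zero initial data (growing like $e^{c\|x\|^2}$), and pulling it back through the time change produces a continuous counterexample to Eq.~\eqref{eq:heat} with $p(\cdot,t_0)=p'(\cdot,t_0)$ but $p\ne p'$. So continuity alone cannot suffice; one must add a hypothesis such as a Gaussian growth bound $|p|\le Ce^{a\|x\|^2}$ (Tychonoff's uniqueness class) or nonnegativity (Widder's theorem, which states that nonnegative solutions of the heat equation are uniquely determined by their initial data). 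The clean repair, which matches exactly how the lemma is used in the paper, is to restate it for solutions that are probability densities for each fixed $t$: in both applications $p$ and $p'$ arise as densities of $x_0+N(0,\sigma_t^2 I)$, hence are positive, integrable, and rapidly decaying, so either repaired hypothesis holds and your argument (or a citation to Widder) closes the proof. In short: your proof is correct on the class of solutions the paper actually needs, and you have correctly diagnosed that the lemma's stated generality is too broad --- a defect in the paper's statement rather than in your argument.
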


\subsection{Main proof} \label{sec:main-pr}

In what appears below we denote 
\begin{equation}\label{eq:def-s}
s(x,t) := \frac{h(x,t)-x}{\sigma_t^2}.
\end{equation}
We start by claiming that if $h$ satisfies Property~\ref{prop:1}, then $s$ satisfies the PDE Eq.~\eqref{eq:PDE}: (proof in Section~\ref{sec:lem=prob-PDE})
\begin{lemma}\label{lem:prop-implies-PDE}
	Let $h$ satisfy Property~\ref{prop:1} and define $s$ according to Eq.~\eqref{eq:def-s}. Then, $s$ satisfies Eq.~\eqref{eq:PDE}.
\end{lemma}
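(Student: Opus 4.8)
The plan is to apply the reverse-time Ito's lemma to the vector-valued map $h$ along trajectories of the SDE Eq.~\eqref{eq:sde-h}, use the Martingale characterization of Lemma~\ref{lem:martingale} to force the resulting drift to vanish, and then rewrite the identity in terms of $s$ to recover Eq.~\eqref{eq:PDE}.

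First I would invoke Lemma~\ref{lem:martingale}: since $h$ satisfies Property~\ref{prop:1}, the process $t \mapsto h(x_t,t)$ is a reverse-Martingale under Eq.~\eqref{eq:sde-h}. Writing that SDE as $dx_t = \mu(x_t,t)\,dt + g(t)\,d\B_t$ with drift $\mu = -g(t)^2 s$ (using the definition Eq.~\eqref{eq:def-s}), I would apply the multivalued reverse-time Ito formula Eq.~\eqref{eq:ito-gen} with $F = h$, obtaining
\[
dh(x_t,t) = \left(\frac{\partial h}{\partial t} + J_h \mu - \frac{g(t)^2}{2}\triangle h\right)dt + g(t) J_h\, d\B_t.
\]
The crucial step is that a reverse-Martingale has no finite-variation part, so the $dt$-coefficient must vanish identically; substituting $\mu = -g(t)^2 s$ this gives $\frac{\partial h}{\partial t} - g(t)^2 J_h s - \frac{g(t)^2}{2}\triangle h = 0$.

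It then remains to translate this from $h$ to $s$ via $h = \sigma_t^2 s + x$. Differentiating in $t$ and using $g(t)^2 = d\sigma_t^2/dt$ gives $\partial_t h = g(t)^2 s + \sigma_t^2\,\partial_t s$, while in the spatial variables $J_h = \sigma_t^2 J_s + I$ and $\triangle h = \sigma_t^2 \triangle s$ (the $x$-term contributes the identity and zero, respectively). Substituting these, the two copies of $g(t)^2 s$ cancel---one from $\partial_t h$, one from $J_h s$ acting through the identity part of $J_h$---and dividing by $\sigma_t^2 > 0$ (valid for $t \in (0,1]$) yields exactly $\partial_t s = g(t)^2\left(J_s s + \tfrac{1}{2}\triangle s\right)$, which is Eq.~\eqref{eq:PDE}.

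The main obstacle, and the only genuinely non-mechanical point, is justifying rigorously that an Ito diffusion which is simultaneously a (reverse) Martingale must have vanishing drift. I would support this via the uniqueness of the semimartingale (Doob--Meyer) decomposition, under which the finite-variation part of a martingale is zero. Some care is also needed with the sign convention of the reverse-time Laplacian term in Eq.~\eqref{eq:ito-gen} and with the regularity assumptions on $h$ needed to apply Ito's lemma; the remaining algebra is routine.
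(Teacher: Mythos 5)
Your proposal is correct and follows essentially the same route as the paper's own proof: apply the reverse-time Ito formula (Eq.~\eqref{eq:ito-gen}) to $h$ along Eq.~\eqref{eq:sde-h}, use Lemma~\ref{lem:martingale} to kill the drift term, then substitute $h = x + \sigma_t^2 s$ together with $g(t)^2 = d\sigma_t^2/dt$ and divide by $\sigma_t^2$. Your added remark on justifying the vanishing drift via uniqueness of the semimartingale decomposition is a rigor point the paper leaves implicit, but the argument is otherwise identical.
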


Next, we claim that the score function of any diffusion process satisfies the PDE Eq.~\eqref{eq:PDE}: (proof in Section~\ref{sec:lem-diff-PDE})
\begin{lemma}\label{lem:diff-implies-PDE}
Let $s$ be the score function of some diffusion process that is defined by Eq.~\eqref{eq:forward}. Then, $s$ satisfies the PDE Eq.~\eqref{eq:PDE}.
\end{lemma}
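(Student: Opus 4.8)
The plan is to derive the PDE Eq.~\eqref{eq:PDE} directly from the Fokker--Planck equation (Lemma~\ref{lem:Fokker}) applied to the forward process Eq.~\eqref{eq:forward}. Since Eq.~\eqref{eq:forward} has zero drift ($\mu \equiv 0$), Fokker--Planck collapses to the heat equation $\partial_t p = \frac{g(t)^2}{2}\triangle p$, where $p(x,t)$ denotes the density of $x_t$. I would then convert this equation for $p$ into an equation for the score $s = \nabla_x \log p$ by logarithmic differentiation, and finally use that $s$ is a gradient field to match the right-hand side of Eq.~\eqref{eq:PDE}.

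First I would compute the time derivative of $\log p$: dividing the heat equation by $p$ gives $\partial_t \log p = \frac{1}{p}\partial_t p = \frac{g(t)^2}{2}\cdot \frac{\triangle p}{p}$. The key algebraic step is to rewrite $\triangle p / p$ purely in terms of $s$. Using $\nabla p = p\, s$ (which is just the definition $s = \nabla p/p$) together with the product rule for the divergence, I obtain $\triangle p = \nabla\cdot(\nabla p) = \nabla\cdot(p\,s) = (\nabla p)\cdot s + p\,(\nabla\cdot s) = p\l(\|s\|^2 + \nabla\cdot s\r)$, hence $\triangle p/p = \|s\|^2 + \nabla\cdot s$. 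Substituting back yields $\partial_t \log p = \frac{g(t)^2}{2}\l(\|s\|^2 + \nabla\cdot s\r)$.

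Next I would take the spatial gradient of both sides and interchange $\nabla_x$ with $\partial_t$ (justified because $p(\cdot,t)$ is smooth---indeed analytic for $t>0$, as established in the main proof), obtaining $\partial_t s = \nabla_x \partial_t \log p = \frac{g(t)^2}{2}\nabla_x\l(\|s\|^2 + \nabla\cdot s\r)$. It remains to evaluate the two gradient terms. Here the essential fact is that $s = \nabla_x \log p$ is a conservative field, so its Jacobian $J_s$ is symmetric (the mixed second partials of $\log p$ commute). Symmetry gives $\nabla_x \|s\|^2 = 2 J_s^\top s = 2 J_s s$, and by the same commuting of third-order partials of $\log p$ one gets $\nabla_x(\nabla\cdot s) = \triangle s$. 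Plugging these in produces $\partial_t s = \frac{g(t)^2}{2}\l(2 J_s s + \triangle s\r) = g(t)^2\l(J_s s + \frac{1}{2}\triangle s\r)$, which is exactly Eq.~\eqref{eq:PDE}.

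The step I expect to require the most care is the conversion of $J_s^\top s$ into $J_s s$ and of $\nabla(\nabla\cdot s)$ into $\triangle s$; both rely crucially on the symmetry of $J_s$, i.e.\ on $s$ being a gradient field. This holds automatically since $s=\nabla_x\log p$ by construction, but it is the one place where the conservative (gradient) structure---rather than just the diffusion dynamics---is invoked, and stating it explicitly is what prevents an erroneous transpose from surviving into the final PDE.
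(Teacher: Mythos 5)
Your proof is correct and follows essentially the same route as the paper's: Fokker--Planck specialized to zero drift, conversion to a PDE for $\log p$ via the identity $\triangle p / p = \|s\|^2 + \nabla\cdot s$, then taking the spatial gradient and using symmetry of $J_s$ (the paper phrases this via the Hessian, $H_{\log p} = J_{\nabla \log p}$) to arrive at Eq.~\eqref{eq:PDE}. The only organizational difference is that the paper first derives the log-density equation for a general drift $\mu$ (Lemma~\ref{lem:fok-log}) and then sets $\mu = 0$, whereas you specialize to the heat equation immediately.
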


To complete the first part of the proof, denote by $p(\cdot,t)$ the probability distribution such that $s(x,t) = \nabla \log p(x,t)$, whose existence follows from Property~\ref{prop:2}. We would like to argue that $\{p(\cdot,t)\}_{t\in (0,1]}$ corresponds the probability density of the diffusion
\begin{equation}\label{eq:noinit}
dx_t = g(t) dB_t.
\end{equation}
It suffices to show that for any $t_0>0$, $\{p(\cdot,t)\}_{t\in (t_0,1]}$ corresponds to the same diffusion. 
To show the latter, let $t_0\in (0,1)$ and consider the diffusion process according to Eq.~\eqref{eq:noinit} with the initial condition that $x_{t_0} \sim p(\cdot, t_0)$. Denote its score function by $s'$ and notice that it satisfies the PDE Eq.~\eqref{eq:PDE} and the initial condition $s'(x,t_0) = \nabla_x \log p(x,t_0) = s(x,t_0)$, where the first equality follows from the definition of a score function and the second from the construction of $p(x,t_0)$. Further, recall that $s(x,t)$ satisfies the same PDE Eq.~\eqref{eq:PDE} by Lemma~\ref{sec:lem=prob-PDE}. 
Next we will show that $s=s'$ for all $t \in [t_0,1]$, and this will follow from the following lemma: (proof in Section~\ref{sec:uniq})
\begin{lemma} \label{lem:uniq}
Let $s$ and $s'$ be two solutions for the PDE \eqref{eq:PDE} on the domain $\mathbb{R}^d \times [t_0,1]$ that satisfy the same initial condition at $t_0$: $s(x,t_0)=s'(x,t_0)$ for all $x$. Further, assume that for all $t \in [t_0,1]$ there exist probability densities $p(\cdot,t)$ and $p'(\cdot,t)$ such that $s(x,t) = \nabla_x \log p(x,t)$ and $s'(x,t)=\nabla_x \log p'(x,t)$ for all $x$.
Then, $s=s'$ on all of $\mathbb{R}^d \times [t_0,1]$.
\end{lemma}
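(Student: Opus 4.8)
The plan is to reduce the uniqueness statement for the vector field $s$ to the uniqueness statement for the underlying densities, i.e.\ to Lemma~\ref{lem:heat-uniq}. The key observation is that although the PDE Eq.~\eqref{eq:PDE} is written for $s$, under the conservativeness hypothesis it is equivalent to the heat equation Eq.~\eqref{eq:heat} for the density $p$ with $s=\nabla\log p$, \emph{provided} one also uses that $p(\cdot,t)$ is a genuine probability density for every $t$.

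First I would rewrite the right-hand side of Eq.~\eqref{eq:PDE} using that $s=\nabla\log p$ is a gradient field, so $J_s$ is symmetric: this gives $J_s s = \tfrac12 \nabla\|s\|^2$ and $\triangle s = \nabla(\nabla\cdot s)$, so Eq.~\eqref{eq:PDE} becomes $\partial_t s = \tfrac{g(t)^2}{2}\nabla(\|s\|^2 + \nabla\cdot s)$. A direct computation with $s=\nabla p/p$ shows $\|s\|^2 + \nabla\cdot s = \triangle p/p$, whence $\partial_t s = \tfrac{g(t)^2}{2}\nabla(\triangle p/p)$. On the other hand $\partial_t s = \partial_t\nabla\log p = \nabla\,\partial_t\log p$, and equating the two expressions gives $\nabla\big(\partial_t\log p - \tfrac{g(t)^2}{2}\,\triangle p/p\big)=0$; thus the bracketed quantity depends on $t$ only, say it equals $c(t)$, and multiplying by $p$ yields $\partial_t p = \tfrac{g(t)^2}{2}\triangle p + c(t)\,p$. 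I would then kill the extra term using the probability constraint: integrating over $x\in\mathbb{R}^d$ and using $\int p(x,t)\,dx = 1$ (so $\int \partial_t p\,dx = 0$) together with $\int \triangle p\,dx = 0$ forces $c(t)=0$. Hence $p$ solves the heat equation Eq.~\eqref{eq:heat} on $[t_0,1]$, and the identical argument applies to $p'$.

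Finally the initial condition transfers: from $s(\cdot,t_0)=s'(\cdot,t_0)$ we get $\nabla\log p(\cdot,t_0)=\nabla\log p'(\cdot,t_0)$, so $\log p(\cdot,t_0)-\log p'(\cdot,t_0)$ is a constant, i.e.\ $p(\cdot,t_0)=\alpha\,p'(\cdot,t_0)$ for some $\alpha>0$; normalization of both densities forces $\alpha=1$, so $p(\cdot,t_0)=p'(\cdot,t_0)$. Applying Lemma~\ref{lem:heat-uniq} then gives $p=p'$ on all of $\mathbb{R}^d\times[t_0,1]$, and therefore $s=\nabla\log p=\nabla\log p'=s'$ there, as required.

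The main obstacle is the spurious zero-order term $c(t)\,p$: the PDE Eq.~\eqref{eq:PDE} determines $p$ only up to a time-dependent multiplicative normalization, which is exactly why the hypothesis that $s$ and $s'$ are score functions of \emph{probability} densities is essential rather than cosmetic. Making the integration step rigorous ($\int\triangle p\,dx=0$ and $\partial_t\int p\,dx = 0$) requires mild decay and regularity of $p$ at infinity, which holds here because each $p(\cdot,t)$ is a convolution of the initial law with a Gaussian and is thus smooth and rapidly decaying.
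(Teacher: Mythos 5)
Your proposal is correct and follows essentially the same route as the paper's proof: rewrite Eq.~\eqref{eq:PDE} under $s=\nabla\log p$ as a statement about $\log p$, integrate the gradient to pick up a time-dependent constant $c(t)$, kill $c(t)$ using the probability normalization $\int p\,dx=1$ together with $\int\triangle p\,dx=0$, and then invoke Lemma~\ref{lem:heat-uniq}. In fact your write-up is slightly more careful than the paper's in two spots: you keep the correct zero-order term $c(t)\,p$ (the paper drops the factor of $p$), and you explicitly transfer the initial condition from $s(\cdot,t_0)=s'(\cdot,t_0)$ to $p(\cdot,t_0)=p'(\cdot,t_0)$ via normalization, a step the paper's lemma proof leaves implicit.
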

Then, by uniqueness of the PDE one obtains that $s=s'$ for all $t \in [t_0,1]$. Hence, $s$ is the score of a diffusion for all $t \ge t_0$ and this holds for any $t_0>0$, hence this holds for any $t > 0$. This concludes the proof of the first part of the theorem.

For the second part, let $s^*$ denote some score function of a diffusion process that satisfies Eq.~\eqref{eq:forward}. Assume that for some $t_0>0$ and some open subset $U \subseteq \mathbb{R}^d$, $s = s^*$, namely $s(x,t_0)=s^*(x,t_0)$ for all $t_0 > 0$ and all $x \in U$. First, we would like to argue that if $s(x,t)$ is the score function of some diffusion process that satisfies Eq.~\eqref{eq:forward}, then for any $t_0>0$ it holds that $s(x,t_0)$ is an analytic function (proof in Section~\ref{sec:pr-analytic})
\begin{lemma}\label{lem:analytic-func}
	Let $x_t$ obey the SDE Eq.~\eqref{eq:forward} with the initial condition $x_0 \sim \mu_0$. Let $t>0$ and let $s(x,t)$ denote the score function of $x_t$, namely, $s(x,t) = \nabla_x \log p(x,t)$ where $p(x,t)$ is the density of $x_t$. Assume that $\mu_0$ is a bounded-support distribution. Then, $s(x,t)$ is an analytic function.
\end{lemma}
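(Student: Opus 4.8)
The plan is to write $p(\cdot,t)$ explicitly as a Gaussian convolution of $\mu_0$, show that it extends to a holomorphic function of a complex argument, and then deduce real-analyticity of $s=\nabla_x\log p$ by the standard closure properties of analytic functions. Since under Eq.~\eqref{eq:forward} we have $x_t = x_0 + \int_0^t g(r)\,dB_r \sim N(x_0,\sigma_t^2 I_d)$ with $\sigma_t^2 = \int_0^t g(r)^2\,dr > 0$ for $t>0$, the density of $x_t$ is
\[
p(x,t) = \int_{\mathbb{R}^d} \frac{1}{(2\pi\sigma_t^2)^{d/2}} \exp\!\l(-\frac{\|x-a\|^2}{2\sigma_t^2}\r)\,d\mu_0(a).
\]

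First I would fix $t>0$ and show that $p(\cdot,t)$ is real-analytic by replacing the real argument $x\in\mathbb{R}^d$ with a complex one $z\in\mathbb{C}^d$ and showing that
\[
P(z) := \int_{\mathbb{R}^d} \frac{1}{(2\pi\sigma_t^2)^{d/2}} \exp\!\l(-\frac{\sum_{j=1}^d (z_j-a_j)^2}{2\sigma_t^2}\r)\,d\mu_0(a)
\]
is an entire function of $z$ that restricts to $p(\cdot,t)$ when $z=x$ is real. Here the bounded-support hypothesis on $\mu_0$ does the crucial work: writing $z=x+iy$, one computes that the modulus of the $j$-th Gaussian factor equals $\exp\!\big((y_j^2-(x_j-a_j)^2)/(2\sigma_t^2)\big)$, so for $a$ ranging over the compact support of $\mu_0$ and $z$ confined to any bounded region of $\mathbb{C}^d$ the integrand is dominated uniformly by an integrable (indeed constant, against the probability measure $\mu_0$) bound. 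This domination justifies differentiating under the integral sign in each complex variable $z_j$ separately, or equivalently applying Morera's theorem coordinatewise, which establishes that $P$ is holomorphic on all of $\mathbb{C}^d$. A function holomorphic on $\mathbb{C}^d$ restricts to a real-analytic function on $\mathbb{R}^d$, so $p(\cdot,t)$ is real-analytic.

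Next I would observe that $p(x,t)>0$ for every $x$, since the Gaussian kernel is strictly positive and $\mu_0$ is a probability measure; hence $\log p(\cdot,t)$ is well-defined with no singularities. Because $p(\cdot,t)$ is real-analytic and strictly positive and $u\mapsto\log u$ is real-analytic on $(0,\infty)$, the composition $\log p(\cdot,t)$ is real-analytic. Finally, since the partial derivatives of a real-analytic function are again real-analytic, $s(x,t)=\nabla_x\log p(x,t)$ is real-analytic in $x$, which is the claim.

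The main obstacle is the rigorous justification that $P$ is holomorphic, i.e.\ that one may interchange differentiation (or the Cauchy integral) with integration against $\mu_0$. This is precisely where bounded support is essential: for an unbounded $\mu_0$ the factor $\exp(-(x_j-a_j)^2/(2\sigma_t^2))$ no longer provides a uniform dominating bound as $a$ escapes to infinity while $z$ carries a nonzero imaginary part, and analyticity could genuinely fail absent additional tail or moment assumptions. Everything else, namely positivity of $p$ and the stability of analyticity under composition with $\log$ and under differentiation, is routine.
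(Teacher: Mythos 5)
Your proof is correct, but it takes a genuinely different route from the paper's. The paper stays entirely in real variables: it writes $s=\nabla_x p/p$ as a ratio of two expectations over $\mu_0$, expands the Gaussian kernel into its power series, uses the bounded support of $\mu_0$ to dominate the coefficients and exchange expectation with summation, concludes that the numerator and denominator have globally convergent Taylor expansions around every point, and finishes with closure of analyticity under quotients with non-vanishing denominator. You instead complexify: you extend the Gaussian convolution to $\mathbb{C}^d$, establish that it is entire via a dominated-convergence/Morera argument, restrict to obtain real-analyticity of $p(\cdot,t)$, and then invoke closure under composition with $\log$ and under differentiation. Two observations. First, your closing claim that bounded support is ``essential'' to your domination step is actually wrong --- and this is a virtue of your method: since $-(x_j-a_j)^2\le 0$, your modulus bound $\exp\bigl((y_j^2-(x_j-a_j)^2)/(2\sigma_t^2)\bigr)\le\exp\bigl(y_j^2/(2\sigma_t^2)\bigr)$ is uniform in $a$ over all of $\mathbb{R}^d$, not just over a compact support, so your argument proves the lemma for an arbitrary probability measure $\mu_0$; it is the paper's term-by-term power-series exchange that genuinely needs $\|a\|\le M$. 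Second, a caveat that affects both proofs equally: the paper defines ``analytic'' by global convergence of the Taylor series around every point, and neither your $\log$-composition step nor the paper's quotient step preserves that strong property (compare $\log(1+x^2)$ and $1/(1+x^2)$, whose Taylor series at $0$ have radius of convergence $1$); what both arguments actually deliver is local real-analyticity of $s$, which is all that the identity-theorem argument of Lemma~\ref{lem:analytic} requires on the connected domain $\mathbb{R}^d$.
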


Since both $s$ and $s^*$ are scores of diffusion processes, then $s(x,t_0)$ and $s^*(x,t_0)$ are analytic functions. Using the fact that $s=s^*$ on $U \times \{t_0\}$ and using Lemma~\ref{lem:analytic} we derive that $s(x,t_0)=s^*(x,t_0)$ for all $x$. Let $p$ and $p^*$ denote the densities that correspond to the score functions $s$ and $s^*$ and by definition of a score function, we obtain that for all $x$,
\[
\nabla \log p(x,t_0) = s(x,t_0) = s^*(x,t_0) = \nabla \log p^*(x,t_0),
\]
which implies, by integration, that
\[
\log p(x,t_0) = \log p^*(x,t_0) + c
\]
for some constant $c \in \mathbb{R}$. However, $c=0$. Indeed,
\[
1 = \int p(x,t_0) dx = \int e^{\log p(x,t_0)}dx
= \int e^{\log p^*(x,t_0) + c}dx
= \int p^*(x,t_0) e^c dx
= e^c,
\]
which implies that $c=0$ as required. As a consequence, the following lemma implies that $p(x,0)=p^*(x,0)$ for all $x$ (proof in Section~\ref{sec:pr-t-0}):
\begin{lemma} \label{lem:t-implies-0}
Let $x_t$ and $y_t$ be stochastic processes that follow Eq.~\eqref{eq:forward} with initial conditions $x_0 \sim \mu_0$ and $y_0 \sim \mu_0'$ and assume that $\mu_0$ and $\mu'_0$ are bounded-support. Assume that for some $t_0>0$, $x_{t_0}$ and $y_{t_0}$ have the same distribution. Then, $\mu_0=\mu_0'$.
\end{lemma}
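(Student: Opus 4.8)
The plan is to pass to characteristic functions (Fourier transforms) and exploit the fact that convolution with a fixed Gaussian is injective. First I would record the explicit law of the processes at time $t_0$. Integrating the forward SDE Eq.~\eqref{eq:forward} gives $x_{t_0} = x_0 + \int_0^{t_0} g(s)\,dB_s$, where the stochastic integral is a centered Gaussian increment with covariance $\sigma_{t_0}^2 I_d$ that is independent of $x_0$; this is exactly the observation already made in the main text that $x_{t_0} \sim N(x_0,\sigma_{t_0}^2 I_d)$ with $x_0\sim\mu_0$. Consequently the law of $x_{t_0}$ is the convolution $\mu_0 * N(0,\sigma_{t_0}^2 I_d)$, and likewise the law of $y_{t_0}$ is $\mu_0' * N(0,\sigma_{t_0}^2 I_d)$, the same Gaussian being convolved in both cases.

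Next I would take Fourier transforms. Writing $\hat\nu$ for the characteristic function of a finite measure $\nu$, the convolution theorem gives that the characteristic function of $x_{t_0}$ equals $\hat\mu_0(\xi)\,e^{-\sigma_{t_0}^2 \|\xi\|^2/2}$, and analogously for $y_{t_0}$. Since $x_{t_0}$ and $y_{t_0}$ are assumed to have the same distribution, their characteristic functions coincide, so
\[
\hat\mu_0(\xi)\,e^{-\sigma_{t_0}^2 \|\xi\|^2/2} = \hat\mu_0'(\xi)\,e^{-\sigma_{t_0}^2 \|\xi\|^2/2}
\quad\text{for all } \xi \in \mathbb{R}^d.
\]
The key step is that the Gaussian factor $e^{-\sigma_{t_0}^2 \|\xi\|^2/2}$ is strictly positive, hence nonvanishing, at every $\xi$ (here we use $t_0>0$, so $\sigma_{t_0}>0$). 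Dividing it out yields $\hat\mu_0 \equiv \hat\mu_0'$ on all of $\mathbb{R}^d$, and by the uniqueness theorem for characteristic functions (the Fourier transform is injective on finite Borel measures) we conclude $\mu_0 = \mu_0'$.

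I do not expect a serious obstacle: the lemma is essentially a deconvolution statement, and the only points needing care are the justification that the law of $x_{t_0}$ is exactly the Gaussian convolution and the invocation of Fourier uniqueness. The bounded-support hypothesis is inherited from the neighboring lemmas rather than genuinely needed for this argument; if one wished to avoid citing Fourier uniqueness directly, bounded support makes $\hat\mu_0$ and $\hat\mu_0'$ entire functions of $\xi$, which would permit an alternative finish by analytic continuation from the region where the equality of characteristic functions is established. Either route gives the claim with no delicate estimates.
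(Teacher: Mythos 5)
Your proof is correct and follows essentially the same route as the paper's: both express the law at time $t_0$ as the convolution $\mu_0 * N(0,\sigma_{t_0}^2 I_d)$, pass to Fourier transforms, divide out the nonvanishing Gaussian factor, and conclude by Fourier uniqueness. The only cosmetic difference is that you phrase the argument with characteristic functions of measures (and correctly observe that bounded support is not needed), while the paper writes the same argument in terms of densities.
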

Without loss of generality, one can replace $0$ with any $\tilde{t} \in (0,t_0)$, to obtain that $p(x,\tilde t) = p^*(x, \tilde t)$ for any $\tilde t \in [0,t_0]$. Now, $p(x,t_0)$ is analytic, from Lemma~\ref{lem:analytic}, hence it is continuous. Consequently, Lemma~\ref{lem:heat-uniq} implies that $p=p^*$ in $\mathbb{R}^d \times [t_0,1]$. This concludes that $p=p^*$ in all the domain, which implies that $s = \nabla \log p = \nabla \log p^* = s^*$, as required.

\subsection{Proof of Lemma~\ref{lem:prop-implies-PDE}} \label{sec:lem=prob-PDE}
We use Ito's lemma, and in particular Eq.~\eqref{eq:ito-gen}, to get a PDE for the function $h(x_t,t)$ where $x_t$ satisfies the stochastic process
\[
dx_t = -g(t)^2 s(x_t,t)dt + g(t) d\B_t.
\]
Ito's formula yields that
\[
dh(x_t,t) = \l(\frac{\partial h}{\partial t} - g(t)^2 J_F s
- \frac{g(t)^2}{2} \triangle h\r)dt + \sigma J_h d\bar{B}_t.
\]
Since $(h,s)$ satisfies Property~\ref{prop:1} and using Lemma~\ref{lem:martingale}, $h$ is a reverse martingale which implies that the term that multiplies $dt$ has to equal zero. In particular, we have that
\begin{equation}\label{eq:mart-h}
	\frac{\partial h}{\partial t} - g(t)^2 J_h s
	- \frac{g(t)^2}{2} \triangle h = 0.
\end{equation}
By Eq.~\eqref{eq:def-s},
\[
s = \frac{h - x}{\sigma_t^2}.
\]
Therefore,
\[
h = x + \sigma_t^2 s.
\]
Substituting this in Eq.~\eqref{eq:mart-h} and using the relation $d\sigma_t^2/dt = g(t)^2$ that follows from Eq.~\eqref{eq:def-s}, one obtains that
\begin{align*}
	0 
	&= \frac{\partial}{\partial t}(x + \sigma_t^2 s)
	- g(t)^2 J_{x + \sigma_t^2 s} s
	- \frac{g(t)^2}{2} \triangle(x + \sigma_t^2 s)\\
	&= g(t)^2 s 
	+ \sigma_t^2 \frac{\partial s}{\partial t}
	- g(t)^2 (I + \sigma_t^2 J_s) s
	- \frac{g(t)^2\sigma_t^2}{2} \triangle s \\
	&= \sigma_t^2 \frac{\partial s}{\partial t}
	- g(t)^2\sigma_t^2 J_s s
	- \frac{g(t)^2\sigma_t^2}{2} \triangle s.
\end{align*}
Dividing by $\sigma_t^2$, we get that
\[
\frac{\partial s}{\partial t}
- g(t)^2 J_s s
- \frac{g(t)^2}{2} \triangle s
= 0,
\]
which is what we wanted to prove.

\subsection{Proof of Lemma~\ref{lem:diff-implies-PDE}} \label{sec:lem-diff-PDE}
We present as a consequence of the Fokker-Plank equation (Lemma~\ref{lem:Fokker}) a PDE for the log density $\log p$:
\begin{lemma}\label{lem:fok-log}
	Let $x_t$ be defined by
	\[
	dx_t = \mu(x_t,t)dt + g(t)dB_t.
	\]
	Then,
	\[
	\frac{\partial \log p}{\partial t}
	= - \nabla \cdot \mu - \mu \nabla \cdot \log p
	+ \frac{g(t)^2 \|\nabla \log p\|^2}{2} + \frac{g(t)^2 \triangle \log p}{2}
	\]
\end{lemma}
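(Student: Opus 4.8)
The plan is to derive the equation for $\log p$ directly from the Fokker--Planck equation of Lemma~\ref{lem:Fokker} by a change of variables, treating the whole argument as an exercise in the product rule. Writing $L := \log p$, so that $p = e^{L}$, the first step is to handle the time derivative on the left-hand side of Fokker--Planck: since $\partial p/\partial t = p\,\partial L/\partial t$, dividing the resulting identity by $p$ at the end will isolate $\partial L/\partial t$. The only real work is therefore to re-express the two spatial terms $\nabla\cdot(\mu p)$ and $\triangle p$ in terms of $L$ and its derivatives.

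For the drift term, I would apply the product rule for the divergence, $\nabla\cdot(\mu p) = p\,(\nabla\cdot\mu) + \mu\cdot\nabla p$, together with the elementary identity $\nabla p = p\,\nabla L$ (which is just $\nabla\log p = \nabla p/p$). This yields $\nabla\cdot(\mu p) = p\,(\nabla\cdot\mu) + p\,\mu\cdot\nabla L$. For the diffusion term, I would rewrite the Laplacian as a divergence of a gradient, $\triangle p = \nabla\cdot(\nabla p) = \nabla\cdot(p\,\nabla L)$, and apply the product rule once more: $\nabla\cdot(p\,\nabla L) = \nabla p\cdot\nabla L + p\,\triangle L = p\,\|\nabla L\|^2 + p\,\triangle L$, where I use $\nabla p = p\,\nabla L$ again in the first summand.

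Substituting these two expansions into the Fokker--Planck equation $\partial p/\partial t = -\nabla\cdot(\mu p) + \tfrac{g(t)^2}{2}\triangle p$, one sees that every term carries a common factor of $p$. Dividing through by $p$ (legitimate wherever $p>0$, which holds throughout the domain of interest since the relevant densities are strictly-positive Gaussian convolutions) then gives exactly
\[
\frac{\partial \log p}{\partial t} = -\,\nabla\cdot\mu - \mu\cdot\nabla\log p + \frac{g(t)^2}{2}\|\nabla\log p\|^2 + \frac{g(t)^2}{2}\triangle\log p,
\]
which is the claimed identity (reading the middle term $\mu\,\nabla\cdot\log p$ of the statement as the inner product $\mu\cdot\nabla\log p$). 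I do not expect any genuine obstacle here: the derivation is entirely mechanical, and the only point deserving a word of care is the positivity of $p$ used to divide by it, which is immediate in every setting where the lemma is invoked.
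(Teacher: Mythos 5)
Your proposal is correct and follows essentially the same route as the paper: express $\partial p/\partial t$, $\nabla p$, and $\triangle p$ in terms of $\log p$ via product-rule identities (the paper obtains $\triangle p = p\|\nabla\log p\|^2 + p\,\triangle\log p$ by coordinate-wise differentiation rather than your $\nabla\cdot(p\,\nabla\log p)$ formulation, but this is the same computation), substitute into the Fokker--Planck equation, and divide by $p$. Your remark about interpreting the statement's $\mu\,\nabla\cdot\log p$ as the inner product $\mu\cdot\nabla\log p$, and about positivity of $p$ justifying the division, matches the paper's implicit conventions.
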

\begin{proof}
	We would like to replace the partial derivatives of $p$ that appears in Lemma~\ref{lem:Fokker} with the partial derivatives of $\log p$.
	Using the formula
	\[
	\frac{\partial \log p}{\partial t}
	= \frac{1}{p}\frac{\partial p}{\partial t} ,
	\]
	one obtains that
	\[
	\frac{\partial p}{\partial t} = p \frac{\partial \log p}{\partial t}.
	\]
	Similarly,
	\begin{equation} \label{eq:dp-dx}
		\frac{\partial p}{\partial x_i} = p \frac{\partial \log p}{\partial x_i}
	\end{equation}
	which also implies that 
	\[
	\nabla p = p \nabla \log p, \quad \nabla \cdot p = p \nabla \cdot \log p.
	\]
	Differentiating Eq.~\eqref{eq:dp-dx} again with respect to $x_i$ and applying Eq.~\eqref{eq:dp-dx} once more, one obtains that
	\[
	\frac{\partial^2 p}{\partial x_i^2}
	= \frac{\partial}{\partial x_i} \l( 
	p \frac{\partial \log p}{\partial x_i}
	\r)
	= \frac{\partial p}{\partial x_i} \frac{\partial \log p}{\partial x_i}
	+ p \frac{\partial^2 \log p}{\partial x_i^2}
	= p \l( \l(\frac{\partial \log p}{\partial x_i}\r)^2
	+ \frac{\partial^2 \log p}{\partial x_i^2}
	\r).
	\]
	Summing over $i$, one obtains that
	\begin{equation}\label{eq:lap-logp}
	\triangle p
	= p \sum_{i=1}^n \l( \l(\frac{\partial \log p}{\partial x_i}\r)^2
	+ \frac{\partial^2 \log p}{\partial x_i^2}
	\r)
	= p \|\nabla \log p\|^2 + p \triangle \log p.
	\end{equation}
	Substituting the partials derivatives of $p$ inside the Fokker-Planck equation in Lemma~\ref{lem:Fokker}, one obtains that
	\[
	p \frac{\partial \log p}{\partial t}
	= - p \nabla \cdot \mu - \mu (p \nabla \cdot \log p) 
	+ \frac{g(t)^2}{2} \l(p \|\nabla \log p\|^2 + p \triangle \log p \r).
	\]
	Dividing by $p$, one gets that
	\[
	\frac{\partial \log p}{\partial t}
	= - \nabla \cdot \mu - \mu \nabla \cdot \log p
	+ \frac{g(t)^2 \|\nabla \log p\|^2}{2} + \frac{g(t)^2 \triangle \log p}{2}.
	\]
	as required.
\end{proof}

We are ready to prove Lemma~\ref{lem:diff-implies-PDE}:
	Substituting $\mu=0$ in Lemma~\ref{lem:fok-log}, on obtains that
	\[
	\frac{\partial \log p}{\partial t}
	= \frac{g(t)^2 \|\nabla \log p\|^2}{2} + \frac{g(t)^2 \triangle \log p}{2}.
	\]
	Taking the gradient with respect to $x$, one obtains that
	\begin{equation}\label{eq:grad-of-fok}
		\nabla \frac{\partial \log p}{\partial t}
		= \frac{g(t)^2 \nabla \|\nabla \log p\|^2}{2} + \frac{g(t)^2 \nabla \triangle \log p}{2}.
	\end{equation}
	Since $\partial/\partial x_i$ commutes with $\partial/\partial t$, it holds that
	\begin{equation}\label{eq:nabl-s}
		\nabla \frac{\partial \log p}{\partial t}
		= \frac{\partial}{\partial t} \nabla \log p = \frac{\partial s}{\partial t},
	\end{equation}
	recalling that by definition $s = \nabla \log p$.
	Further,
	\[
	\frac{\partial}{\partial x_i} \|\nabla \log p\|^2
	= \sum_{j=1}^n \frac{\partial}{\partial x_i} \l(\frac{\partial \log p}{\partial x_j}\r)^2
	= 2\sum_{j=1}^n 
	\frac{\partial^2 \log p}{\partial x_i \partial x_j} \frac{\partial \log p}{\partial x_j}
	= 2(H_{\log p} \nabla \log p)_i,
	\]
	where for any function $f\colon \mathbb{R}^d\to \mathbb{R}$, $H_{f}$ is the Hessian function of $f$ that is defined by
	\[
	(H_{f})_{ij} = \frac{\partial^2 f}{\partial x_i \partial x_j}
	\]
	This implies that
	\[
	\nabla \|\nabla \log p\|^2 = 2 H_{\log p} \nabla \log p.
	\]
	Further, notice that 
	\[
	H_f = J_{\nabla f},
	\]
	which implies that
	\begin{equation}\label{eq:nabl-grad-s}
		\nabla \|\nabla \log p\|^2
		= 2 J_{\nabla \log p} \nabla \log p
		= 2 J_s s.
	\end{equation}
	Lastly, we get that by the commutative property of partial derivatives, 
	\begin{equation}\label{eq:nabl-lap-s}
		\nabla \triangle \log p = \triangle \nabla \log p = \triangle s.
	\end{equation}
	Substituting Eq.~\eqref{eq:nabl-s}, Eq.~\eqref{eq:nabl-grad-s} and Eq.~\eqref{eq:nabl-lap-s} in Eq.~\eqref{eq:grad-of-fok}, one obtains that
	\[
	\frac{\partial s}{\partial t}
	= g(t)^2 J_s s + \frac{g(t)^2 \triangle s}{2},
	\]
	as required.

\subsection{Proof of Lemma~\ref{lem:uniq}} \label{sec:uniq}

We will prove that $p$ and $p'$ satisfy the same PDE (which is the heat equation). Recall that $s$ and $s'$ satisfy
\[
\frac{\partial s}{\partial t} = g(t)^2 \l( J_s s + \frac{1}{2}\triangle s\r) = \frac{g(t)^2}{2} \l( \nabla\|s\|^2 + \triangle s\r)
\]
By substituting $s = \nabla \log p$,
\[
\frac{\partial\nabla \log p}{\partial t}
=  \frac{g(t)^2}{2} \l( \nabla\|\nabla \log p\|^2 + \triangle \nabla \log p\r).
\]
By exchanging the order of derivatives, we obtain that
\[
\nabla \frac{\partial\log p}{\partial t}
= \nabla \frac{g(t)^2}{2} \l( \|\nabla \log p\|^2 + \triangle \log p\r).
\]
By integrating, this implies that
\[
\frac{\partial\log p}{\partial t}
= \frac{g(t)^2}{2} \l( \|\nabla \log p\|^2 + \triangle \log p\r) + c(t),
\]
where $c(t)$ depends only on $t$.
Eq.~\eqref{eq:lap-logp} shows that
\[
\triangle \log p
= \frac{\triangle p}{p} - \|\nabla \log p\|^2.
\]
By substituting this in the equation above, we obtain that
\[
\frac{\partial\log p}{\partial t}
= \frac{g(t)^2}{2} \frac{\triangle p}{p} + c(t).
\]
By multiplying both sides with $p$, we get that
\begin{equation} \label{eq:something}
\frac{\partial p}{\partial t}
= p \frac{\partial \log p}{\partial t}
= \frac{g(t)^2}{2} \triangle p  + c(t).
\end{equation}
Since $p$ is a probability distribution, 
\[
\int_{\mathbb{R}^d}  p(x,t) dx = 1,
\]
therefore,
\[
\int \frac{\partial p(x,t)}{\partial t}dx
= \frac{\partial}{\partial t} \int_{\mathbb{R}^d}  p(x,t) dx
= \frac{\partial 1}{\partial t}
= 0.
\]
Integrating over Eq.~\eqref{eq:something} we obtain that
\[
0 = \int \frac{g(t)^2}{2} \triangle p + c(t) dx
= 0 + \int c(t) dx,
\]
where the last equation holds since the integral of a Laplacian of probability density integrates to $0$. It follows that $c(t)=0$ which implies that
\begin{equation}\label{eq:heat-pr}
\frac{\partial p}{\partial t}
= \frac{g(t)^2}{2} \triangle p,
\end{equation}
and the same PDE holds where $p'$ replaces $p$, and this follows without loss of generality. Further, since $\log p$ and $\log p'$ are differentiable, it holds that $p(\cdot,t)$ and $p'(\cdot,t)$ are continuous for all fixed $t$. This implies that $p$ and $p'$ are continuous as functions of $x$ and $t$ since they both satisfy the heat equation Eq.~\eqref{eq:heat}. Consequently, Lemma~\ref{lem:heat-uniq} implies that $p=p'$ on $\mathbb{R}^d \times [t_0,1]$. Finally, $s = \nabla \log p = \nabla \log p' = s'$, as required.

\subsection{Proof of Lemma~\ref{lem:analytic-func}} \label{sec:pr-analytic}

First, recall that since $x_t$ satisfies Eq.~\eqref{eq:forward} with the initial condition $x_0 \sim \mu_0$, then $x_t \sim \mu_0 + N(0, \sigma_t^2 I)$, namely, $x_t$ is the addition of a random variable drawn from $\mu_0$ and an independent Gaussian $N(0,\sigma_t^2 I)$. Therefore, the density of $x_t$, which we denote by $p(x,t)$, equals
\[
p(x,a) = \E_{a \sim \mu_0}\l[\frac{1}{\sqrt{2\pi}\sigma_t} \exp\l(- \frac{\|x-a\|^2}{2\sigma_t^2} \r) \r].
\]
Using the equation 
\[
\nabla_x \log f(x) = \frac{\nabla_x f(x)}{f(x)},
\]
we get that
\begin{equation} \label{eq:Taylor1}
s(x,a)
= \nabla_x \log p(x,a)
= \frac{\nabla_x p(x,a)}{p(x,a)}
= \frac{\E_{a \sim \mu_0}\l[\frac{1}{\sqrt{2\pi}\sigma_t} \frac{x-a}{\sigma_t^2
	}\exp\l(- \frac{\|x-a\|^2}{2\sigma_t^2} \r) \r]}{\E_{a \sim \mu_0}\l[\frac{1}{\sqrt{2\pi}\sigma_t} \exp\l(- \frac{\|x-a\|^2}{2\sigma_t^2} \r) \r]}
\end{equation}
By using the fact that the Taylor formula for $e^x$ equals
\[
e^x = \sum_{i=0}^\infty \frac{e^i}{i!},
\]
we obtain that the right hand side of Eq.~\eqref{eq:Taylor1} equals
\begin{equation} \label{eq:analytic2}
\frac{\E_{a \sim \mu_0}\l[\frac{1}{\sqrt{2\pi}\sigma_t} \frac{x-a}{\sigma_t^2
	}\sum_{i=0}^\infty \frac{(-1)^i}{i!}\l(\frac{\|x-a\|^2}{2\sigma_t^2}\r)^i \r]}{\E_{a \sim \mu_0}\l[\frac{1}{\sqrt{2\pi}\sigma_t} \sum_{i=0}^\infty\frac{(-1)^i}{i!}\l(\frac{\|x-a\|^2}{2\sigma_t^2}\r)^i \r]}
= \frac{\E_{a \sim \mu_0}\l[ \frac{x-a}{\sigma_t^2
	}\sum_{i=0}^\infty \frac{(-1)^i}{i!}\l(\frac{\|x-a\|^2}{2\sigma_t^2}\r)^i \r]}{\E_{a \sim \mu_0}\l[ \sum_{i=0}^\infty\frac{(-1)^i}{i!}\l(\frac{\|x-a\|^2}{2\sigma_t^2}\r)^i \r]}
\end{equation}
We will use the following property of analytic functions: if $f$ and $g$ are analytic functions over $\mathbb{R}^d$ and $g(x) \ne 0$ for all $x$ then $f/g$ is analytic over $\mathbb{R}^d$. Since the denominator at the right hand side of Eq.~\eqref{eq:analytic2} is nonzero, it suffices to prove that the numerator and the denominator are analytic. We will prove for the denominator and the proof for the numerator is nearly identical.
By assumption of this lemma, the support of $\mu_0$ is bounded, hence there is some $M>0$ such that $\|x\|\le M$ for any $x$ in the support.
Then,
\[
\l|\frac{(-1)^i}{i!} \l(\frac{\|x-a\|^2}{2\sigma_t^2}\r)^i \r|
\le \frac{1}{i!} \l(\frac{x^2+a^2}{\sigma_t^2}\r)^i
= \frac{M^{2i}}{\sigma_t^{2i} i!}.
\]
This bound is independent on $a$, and summing these abvolute values of coefficients for $i \in \mathbb{N}$, one obtains a convergent series. Hence we can replace the summation and the expectation in the denominator at the right hand side of Eq.~\eqref{eq:analytic2} to get that it equals
\begin{equation} \label{hypothesized-Taylor}
\sum_{i=0}^\infty \frac{(-1)^i}{i!}\E_{a \sim \mu_0}\l[ \l(\frac{\|x-a\|^2}{2\sigma_t^2}\r)^i \r].
\end{equation}
This is the Taylor series around $0$ of the above-described denominator it converges to the value of the denominator at any $x$. While this Taylor series is taken around $0$, we note the Taylor series around any other point $x_0\in\mathbb{R}^n$ converges as well. This can be shown by shifting the coordinate system by a constant vector such that $x_0$ shifts to $0$ and applying the same proof. One deduces that the Taylor series for the denominator around any point $x_0$ converges on all $\mathbb{R}^d$, which implies that the denominator in the right hand side of Eq.~\eqref{eq:analytic2} is analytic. The numerator is analytic as well by the same argument. Therefore the ratio, which equals $s(x,t)$, is analytic as well as required.

\subsection{Proof of Lemma~\ref{lem:t-implies-0}}
\label{sec:pr-t-0}

Let $t>0$, denote by $\mu_t$ and $\mu'_t$ the distributions of $x_t$ and $x_t'$, respectively, and by $p(x,t)$ and $p'(x,t)$ the densities of these variables. Then, $\mu_t = \mu_0 + N(0,\sigma_t^2I)$, namely, $\mu_t$ is obtained by adding an independent sample from $\mu_0$ with an independent $N(0,\sigma_t^2I)$ variables, and similarly for $\mu_t'$. Hence, the density $p(x,t)$ is the convolution of the densities $p(x,0)$ with the density of a Gaussian $N(0,\sigma_t^2I)$. Denote by $\hat{p}(y,t)$ the Fourier transform of the density $p(x,t)$ with respect to $x$ (while keeping $t$ fixed) and similarly define $\hat{p}'$ as the Fourier transform of $p'$. Denote by $g$ and by $\hat{g}$ the density of $N(0,\sigma_t^2I)$ and its Fourier transform, respectively. Denote the convolution of two functions by the operator $*$. Then,
\[
p(x,t) = p(x,0) * g(x), \quad p'(x,t) = p'(x,0) * g(x).
\]
Since the Fourier transform turns convolutions into multiplications, one obtains that
\[
\hat p(y,t) = \hat p(y,0) \hat g(y), \quad \hat p'(y,t) = \hat p'(y,0) \hat g(y).
\]
Since $p(x,t) = p'(x,t)$ we obtain that $\hat p(y,t) = \hat p'(y,t)$. Consequently,
\[
\hat p(y,0) \hat g(y)
= \hat p'(y,0) \hat g(y)
\]
Since the Fourier transform of a Gaussian is nonzero, we can divide by $\hat g(y)$ to get that
\[
\hat p(y,0) = \hat p'(y,0).
\]
This implies that the Fourier transform of $p(x,0)$ equals that of $p'(x,0)$ hence $p(x,0)=p'(x,0)$ for all $x$, as required.

\section{Other proofs}

\subsection{Differentiating the loss function}\label{sec:loss-der}
Denote our parameter space as $\Theta \subseteq \mathbb{R}^m$.
In order to differentiate $L^1_{t,t',x}(\theta)$ with respect to $\theta \in \Theta$, we make the following calculations below, and we notice that $\E_\theta$ is used to denote an expectation with respect to the distribution of $x_{[t',t]}$ according to Eq.~\eqref{eq:sde-h} with $s=s_\theta$ and the initial condition $x_t=x$. In other words, the expectation is over $x_{[t',t]}$ that is taken with respect to the sampler that is parameterized by $\theta$ with the initial condition $x_t=x$. We denote by $p_\theta(x_{[t',t]}\mid x_t=x)$ the corresponding density of $x_{[t',t]}$. For any function $f =(f_1,\dots,f_n)\colon \Theta \to \mathbb{R}^n$, denote by $\nabla_\theta f$ the Jacobian matrix of $f$, where
\[
\l(\nabla_\theta f\r)_{i,j} = \frac{\partial f_i}{\partial \theta_j}.
\]
For notational consistency, if $f$ is a single-valued function, namely, if $n=1$, then $\nabla_\theta f$ is a column vector.
We begin with the following:
\begin{align*}
\nabla_\theta \E_{\theta}\l[h_\theta(x_{t'},t')\r]
&= \nabla_\theta \int_{\mathbb{R}^d} h_\theta(x_{t'},t') p_\theta(x_{[t',t]}\mid x_t=x) dx_{t'}\\
&= \int_{\mathbb{R}^d} \nabla_\theta h_\theta(x_{t'},t') p_\theta(x_{[t',t]}\mid x_t=x) dx_{t'}
+ \int_{\mathbb{R}^d}
h_\theta(x_{t'},t') \nabla_\theta p_\theta(x_{[t',t]}|x_t=x) dx_{t'}\\
&= \E_{\theta}\l[\nabla_\theta h_\theta(x_{t'},t')\r]
+ \E_\theta\l[
h_\theta(x_{t'},t') \frac{\nabla_\theta p_\theta(x_{[t',t]}|x_t=x)}{p_\theta(x_{[t',t]}|x_t=x)}\r] \\
&= \E_\theta\l[\nabla_\theta h_\theta(x_{t'},t')\r]
+ \E_\theta\l[ h_\theta(x_{t'},t')
\nabla_\theta \log \l(p_\theta(x_{[t',t]} \mid x_t=x)\r)
\r]
\end{align*}
Differentiating the whole loss, we get the following:
\begin{align*}
\nabla_\theta L^1_{t,t',x}(\theta)
&= \frac{1}{2} \nabla_\theta \l(\E_{\theta}[h_\theta(x_{t'},t')] - h_\theta(x,t)\r)^2\\
&= \l(\E_{\theta}[h_\theta(x_{t'},t')] - h_\theta(x,t)\r)^\top
\l(\nabla_\theta\E[h_\theta(x_{t'},t')] - \nabla_\theta h_\theta(x,t)\r)\\
&= \E_\theta\l[h_\theta(x_{t'},t') - h_\theta(x,t)\r]^\top
\E_\theta\l[\nabla_\theta h_\theta(x_{t'},t') - \nabla_\theta h_\theta(x,t)\r] \\
&+ \E_\theta\l[h_\theta(x_{t'},t') - h_\theta(x,t)\r]^\top \E_\theta\l[
h_\theta(x_{t'},t') \nabla_\theta \log \l(p_\theta(x_{[t',t]} \mid x_t=x)\r) \r] 
\end{align*}

Let us compute the gradient of the log density. We use the discrete process, and let us assume that $t=t_0 > t_1 > \cdots > t_k = t'$ are the sampling times. Then,
\[
p_\theta(x_{[t',t]}\mid x_t=x)
= \prod_{i=1}^k p_\theta(x_{t_i}\mid x_{t_{i-1}}).
\]
We assume that 
\[
p_\theta(x_{t_i} \mid t_{i-1})
= \mathcal{N}(\mu_{\theta,i},g_i I_d).
\]
Then,
\[
p_\theta(x_{[t',t]}\mid x_t=x)
\propto \prod_{i=1}^k \exp\l(-\frac{\|\mu_{\theta,i} - (x_{t_i} - x_{t_{i-1}})\|^2}{2g_i^2} \r)
\]
Therefore
\[
\log p_\theta(x_{[t',t]}\mid x_t=x) = C + \sum_{i=1}^k \frac{\|\mu_{\theta,i} - (x_{t_i} - x_{t_{i-1}})\|^2}{2g_i^2}
\]
where $C$ corresponds to the normalizing factor that is independent of $\theta$. Differentiating, we get that
\[
\nabla_\theta \log p_\theta(x_{[t',t]}\mid x_t=x)
= \sum_{i=1}^k \frac{\l(\mu_{\theta,i} - (x_{t_i} - x_{t_{i-1}})\r)^\top \nabla_\theta \mu_{\theta,i}}{g_i^2}
\]
\subsection{Proof of Lemma~\ref{lem:martingale}}\label{sec:mart-equiv}

In what appears below, the expectation $\E[\cdot \mid x_t=x]$ is taken with respect to the distribution obtained by Eq.~\eqref{eq:sde-h}, namely, the backward SDE that corresponds to the function $s$, with the initial condition $x_t=x$. Similarly, $\E[\cdot \mid x_{t'}]$ is taken with the initial condition at $x_{t'}$.
To prove the first direction in the equivalence, assume that Property~\ref{prop:1} holds and our goal is to prove the two consequences as described in the lemma. To prove the first consequence, by the law of total expectation and by the fact that $x_t-x_{t'}-x_0$ is a Markov chain, namely, $x_0$ and $x_t$ are independent conditioned on $x_{t'}$, we obtain that
\[
h(x,t) 
= \E[x_0 \mid x_t=x]
= \E[\E[x_0 \mid x_{t'}] \mid x_t=x]
= \E[h(x_{t'},t')\mid x_t=x].
\]
To prove the second consequence, by Property~\ref{prop:1}
\[
h(x,0) = \E[x_0 \mid x_0 = x] = x_0.
\]
This concludes the first direction in the equivalence.

To prove the second direction, assume that $h(x,t) = \E[h(x_{t'},t')\mid x_t=x]$ and that $h(x,0) = x$ and notice that by substituting $t'=0$ we derive the following:
\[
h(x,t) = \E[h(x_0,0)\mid x_t=x] = \E[x_0 \mid x_t=x],
\]
as required.

\section{Additional Results}

\subsection{Property Testing}

\begin{figure}[!htp]
    \centering
    \includegraphics[width=0.7\textwidth]{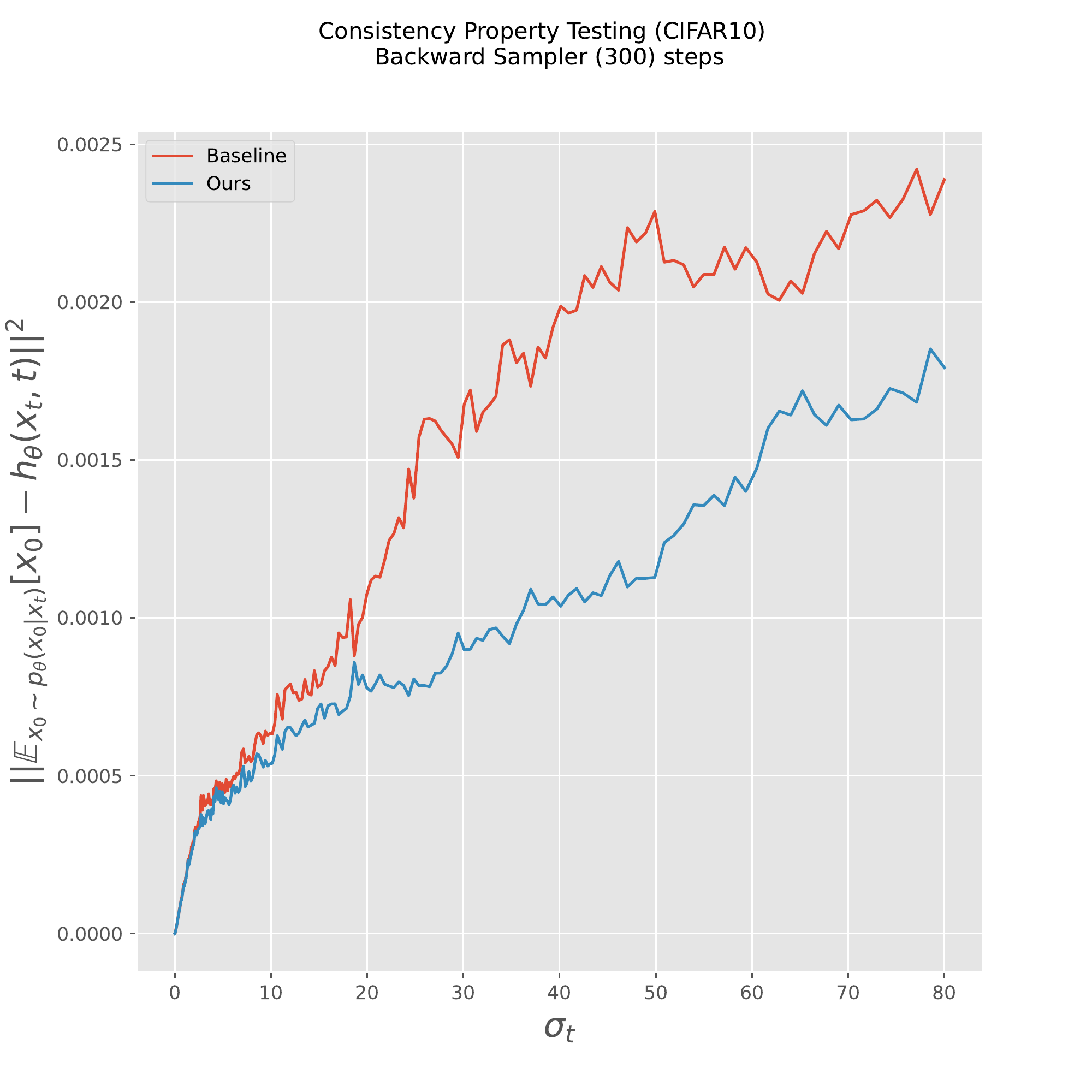}
    \caption{Consistency Property Testing on CIFAR10. The plot illustrates how the Consistency Loss, $L^2_{t, t', x_t}$, behaves for $t'=0$, as $t$ changes.}
    \label{fig:cifar10_test_1}
\end{figure}

\begin{figure}[!htp]
    \centering
    \includegraphics[width=0.7\textwidth]{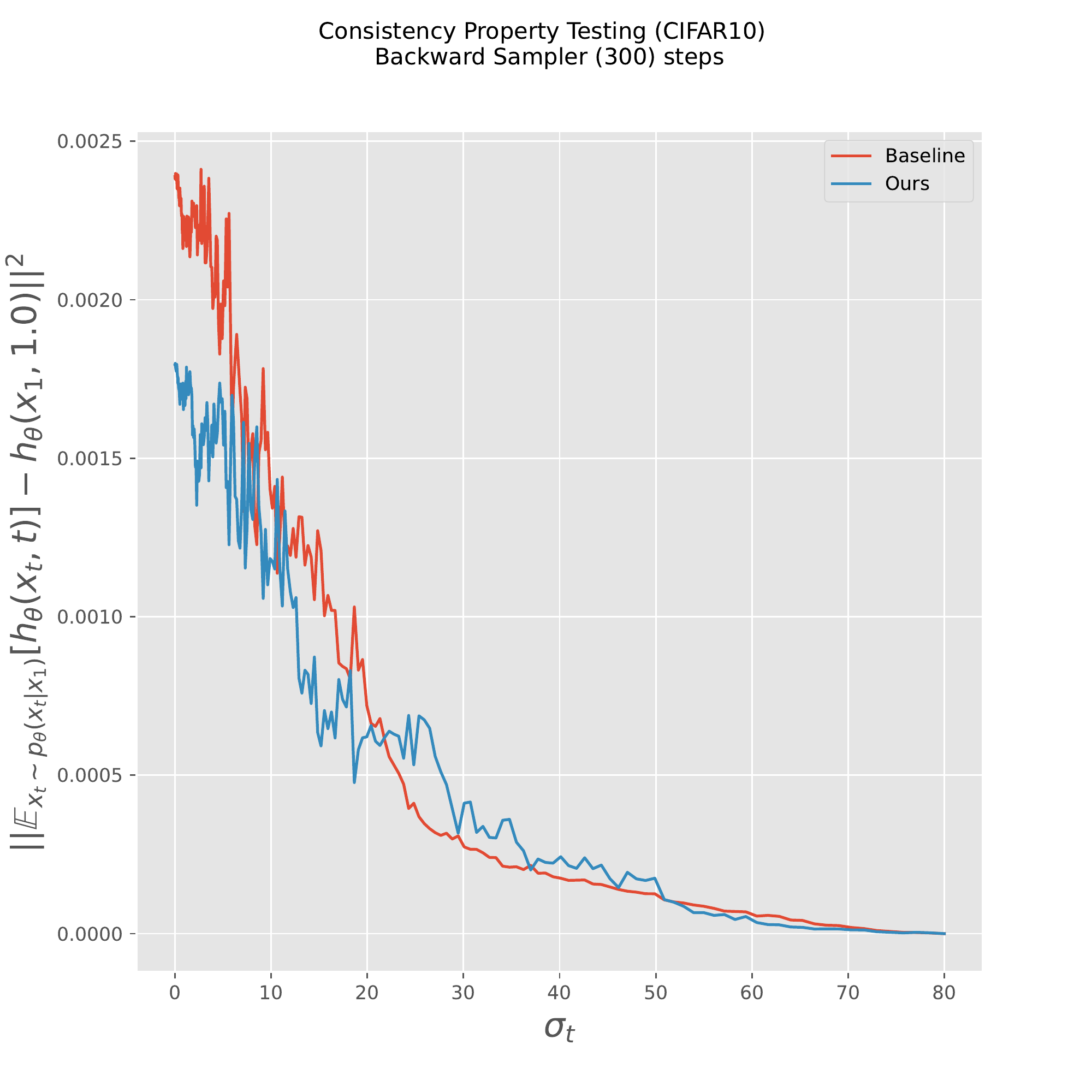}
    \caption{Consistency Property Testing on CIFAR10. The plot illustrates how the Consistency Loss, $L^2_{t, t', x_t}$, behaves for $t=0$, as $t'$ changes.}
    \label{fig:cifar10_test_2}
\end{figure}

\begin{figure}[!htp]
    \centering
    \includegraphics[width=0.7\textwidth]{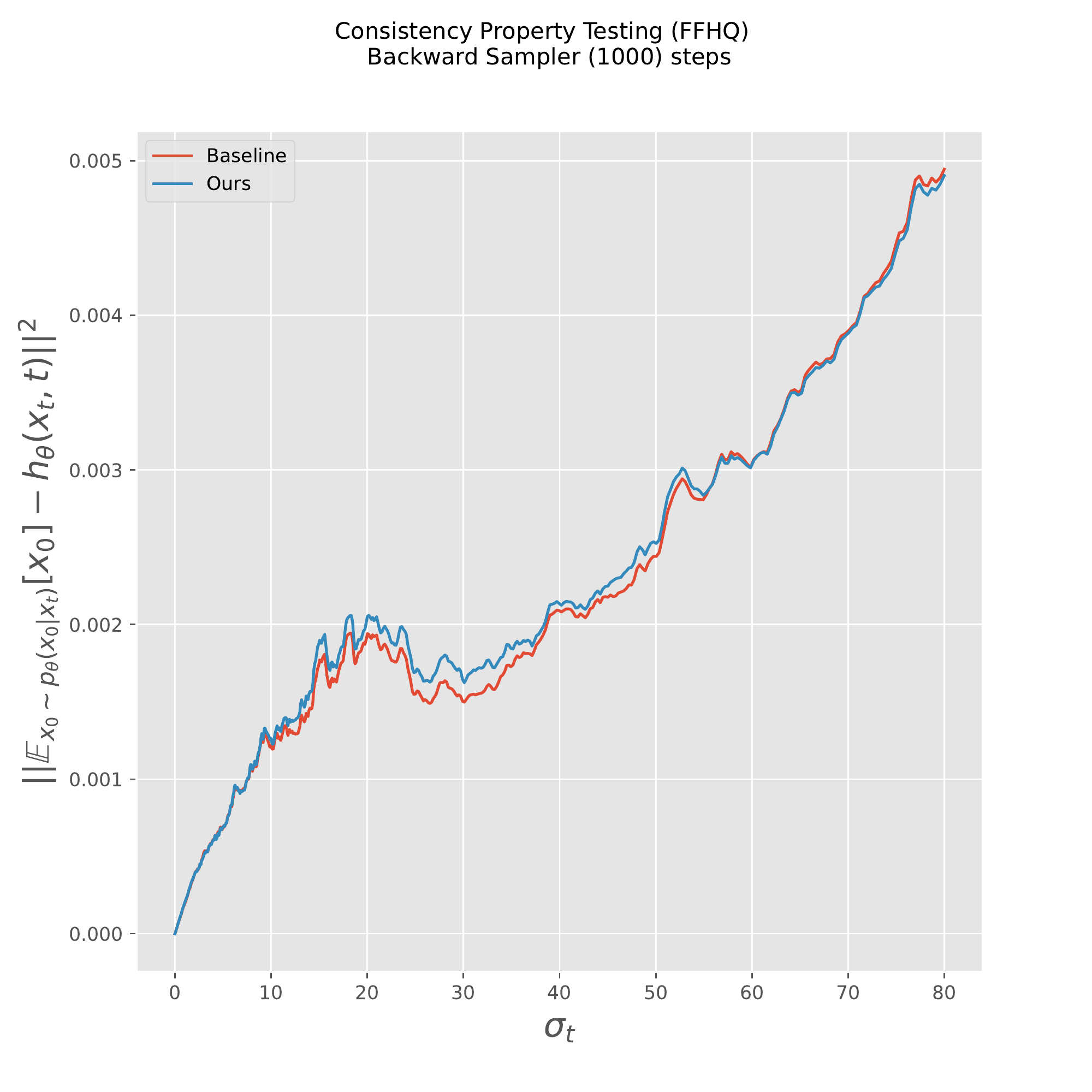}
    \caption{Consistency Property Testing on FFHQ. The plot illustrates how the Consistency Loss, $L^2_{t, t', x_t}$, behaves for $t'=0$, as $t$ changes.}
    \label{fig:ffhq_test_1}
\end{figure}

\begin{figure}[!htp]
    \centering
    \includegraphics[width=0.7\textwidth]{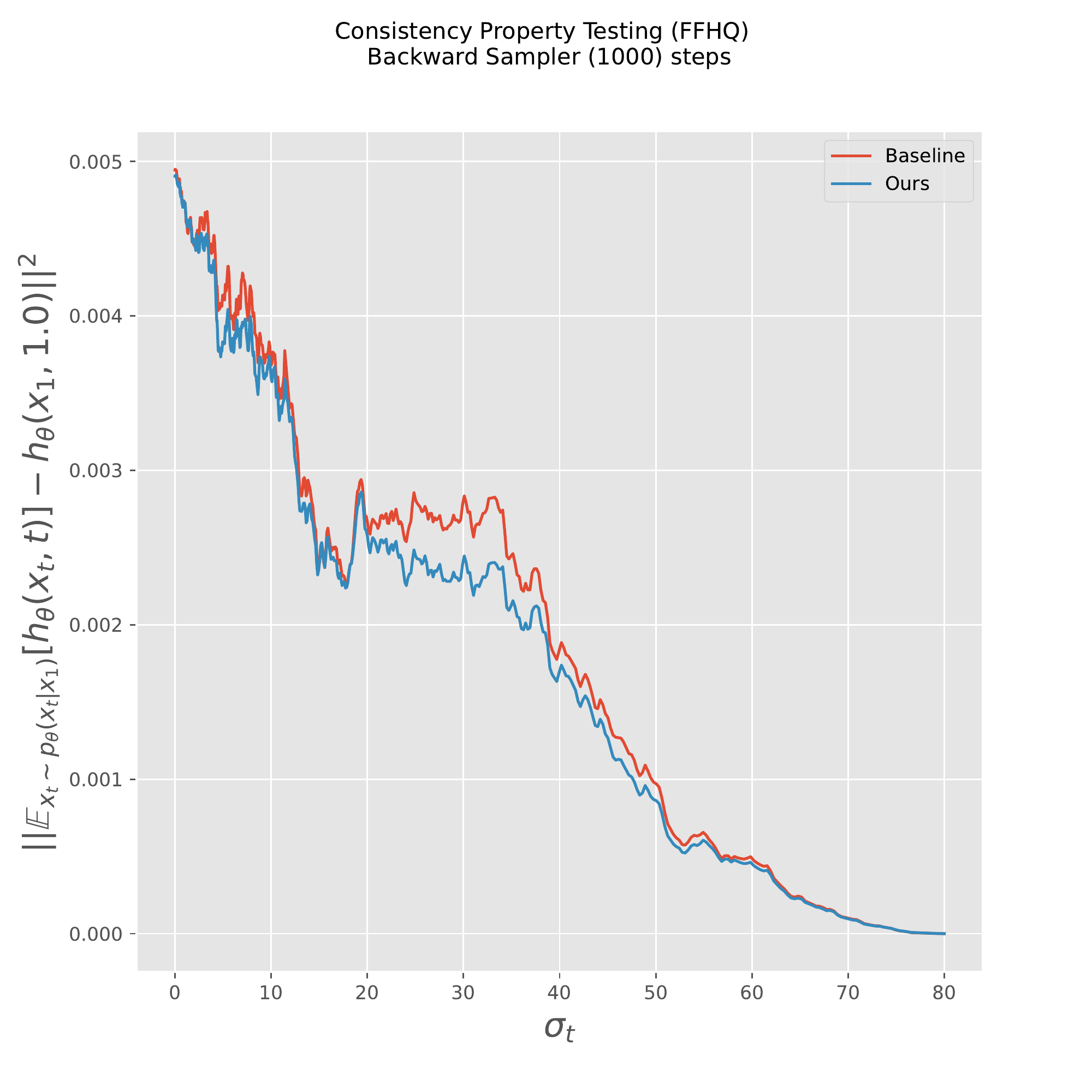}
    \caption{Consistency Property Testing on FFHQ. The plot illustrates how the Consistency Loss, $L^2_{t, t', x_t}$, behaves for $t=0$, as $t'$ changes.}
    \label{fig:ffhq_test_2}
\end{figure}

\newpage
\subsection{Uncurated Samples}
\begin{figure}[!htp]
    \centering
    \includegraphics[width=0.6\textwidth]{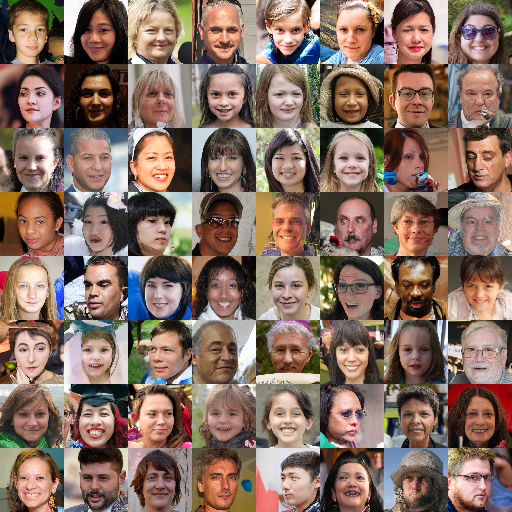}
    \caption{Uncurated generated images by our fine-tuned model on FFHQ. FID: $2.61$, NFEs: $79$.}
    \label{fig:ffhq_generations}
\end{figure}


\end{document}